\DeclareMathOperator*{\argmin}{arg\,min}
\DeclareMathOperator*{\argmax}{arg\,max}
\newtheorem{prop}{Property}
\ifcvprfinal\pagestyle{empty}\fi
\begin{document}

%%%%%%%%% TITLE
\title{Learning Joint Feature Adaptation for Zero-Shot Recognition}

\author{Ziming Zhang\\
		Mitsubishi Electric Research Laboratories \\
		201 Broadway, Cambridge, MA 02139-1955 \\
		{\tt\small zzhang@merl.com}
		% For a paper whose authors are all at the same institution,
		% omit the following lines up until the closing ``}''.
		% Additional authors and addresses can be added with ``\and'',
		% just like the second author.
		% To save space, use either the email address or home page, not both
		\and
		Venkatesh Saligrama\\
		ECE, Boston University \\
		8 Saint Mary's Street, Boston, MA 02215\\
		{\tt\small srv@bu.edu}
}

\maketitle
%\thispagestyle{empty}

%%%%%%%%% ABSTRACT
\begin{abstract}
	
	Zero-shot recognition (ZSR) aims to recognize target-domain data instances of unseen classes based on the models learned from associated pairs of seen-class source and target domain data. One of the key challenges in ZSR is the relative {\em scarcity} of source-domain features (\eg one feature vector per class), which do not fully account for wide variability in target-domain instances.
	%, because such few features have very limited capability of describing target data properly, among which there exists huge within-class variation in contrast. As a result, such issue degrades the recognition performance fundamentally in the problem of ZSR.
	
	In this paper we propose a novel framework of learning data-dependent feature transforms for scoring similarity between an arbitrary pair of source and target data instances to account for the wide variability in target domain. Our proposed approach is based on optimizing over a parameterized family of local feature displacements that maximize the source-target adaptive similarity functions. Accordingly we propose formulating zero-shot learning (ZSL) using {\em latent structural SVMs} to learn our similarity functions from training data. As demonstration we design a specific algorithm under the proposed framework involving bilinear similarity functions and regularized least squares as penalties for feature displacement. %Technically we maximize the potential matching score by allowing both source and target features to be modified slightly in their original feature spaces. Such adaptation will incur some penalties upon the overall score, and our similarity function will automatically search for the best adaptation. By taking the adapted features as latent variables, we can formulate the zero-shot learning (ZSL) procedure using {\em latent structural SVMs}. %We further extend our feature adaptation approach into target domain as well. 
	We test our approach on several benchmark datasets for ZSR and show significant improvement over the state-of-the-art. For instance, on aP\&Y dataset we can achieve 80.89\% in terms of recognition accuracy, outperforming the state-of-the-art by 11.15\%.
	
\end{abstract}

%%%%%%%%% BODY TEXT
\section{Introduction}
While there has been significant progress on supervised large-scale classification in recent years \cite{ILSVRCarxiv14}, the lack of sufficient annotated training data uniformly across  all classes \cite{Bhatia15,antol2014zero} has been a bottleneck in achieving acceptable performance. At a basic level, in these cases we encounter situations where we may have sufficient annotated training data for some of the classes and little or even no annotated data to train supervised classifiers for the other interesting classes. In this context, a fundamental question that arises is as to how to leverage training data for observed classes for recognition of rare or unobserved classes. 

One possible scenario is when we have data from a different domain that can be collected easily, as is assumed to be the case in zero-shot recognition (ZSR). In ZSR we are given {\em source} and {\em target} domains as training data belonging to a sub-collection of classes, forming {\em seen} or observed classes. No training data is available for other {\em unseen} classes. The class information in source domain is described in a variety of ways such as attribute vectors \cite{farhadi2009attribute,10.1109/TPAMI.2013.140,mensink2012metric,Parikh:2011:IBD:2191740.2191861,rohrbach2011largeScale}, language words/phrases \cite{Berg:2010:AAD:1886063.1886114,frome2013devise,socher2013zero}, or even learned classifiers \cite{yu2013designing}. Target domain is described by a joint distribution of data (\eg images or videos) and labels \cite{10.1109/TPAMI.2013.140, wu2014zero}. 

\begin{figure}[t]
	\centerline{\includegraphics[width=\linewidth]{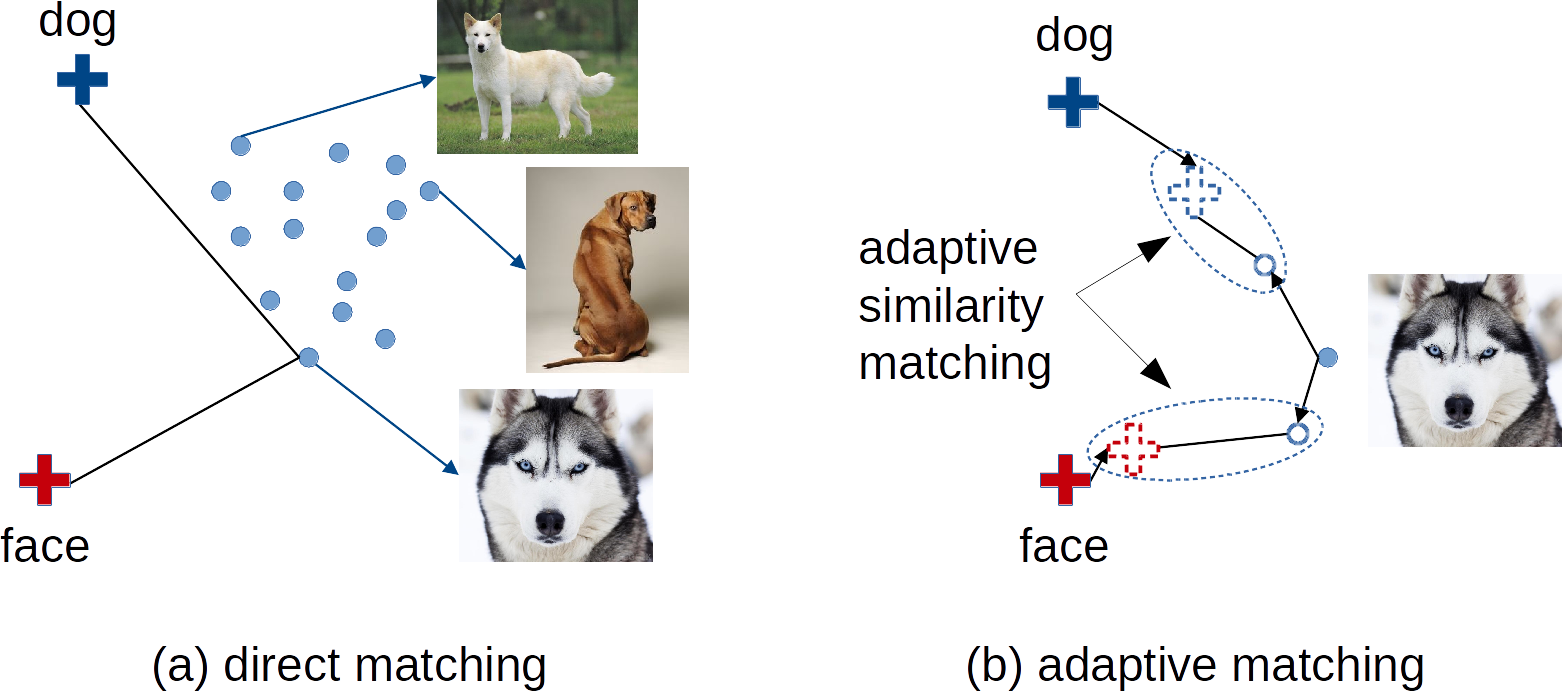}}
	\vspace{1mm}
	\caption{\footnotesize Illustration of our intuition behind learning joint data-dependent feature adaptation for ZSR. Here $+$ and $\circ$ denote source and target domain data embeddings, respectively, colors denote different classes, filled/empty shapes denote original/adapted feature embeddings. Using the direct matching in (a) the image will be mis-classified as ``face'' (based on distance measure), while using the adaptive matching in (b) it will be classified correctly as ``dog''.}\label{fig:idea}
	\vspace{-4mm}
\end{figure}

The challenge in ZSR lies in learning models based on seen-class data that can generalize to unseen classes. In this context, many perspectives for zero-shot learning (ZSL) have been proposed, including unseen classifier prediction from source domain data based on learning attribute classifiers~\cite{lampert2009attribute}, learning similarity functions between source and target domains to score similarity for unseen classes~\cite{akata2013label}, and manifold embedding methods based on identifying inter-class relationships in source and target data that can be aligned during test time~\cite{Zhang2015,zhang2016zero,Changpinyo_2016_CVPR}. 

%\noindent
%{\bf Intuition:}
Nevertheless, the challenge posed by the relative sparseness of source domain descriptions in recognition has not been fully considered. In particular, the target-domain data exhibits significant intra-class variation (\eg appearance and poses). On the other hand the source domain information is relatively sparse and typically amounting to a single attribute vector. This is generally insufficient to account for all of the intra-class variation. %has to be rich enough to describe such variation. In contrast ZSR aims to borrow very {\em limited} information (\eg one feature vector per class) in source domain to help recognize the data instances from unseen classes in target domain. Therefore, in general ZSR can hardly deal with large variation in target domain, leading to significant performance degradation in practice.

Fig. \ref{fig:idea} illustrates this point. In the joint embedded feature space, the target-domain data distributions of certain classes (\eg ``dog'' class in the figure) are relatively flat and consist of data instances with large variation. This issue leads us to view the presented source domain vector as a ``mean-value'' over all candidate (or alternative) source vectors. During test time for a given target instance we optimize the matching over all possible source and target candidates in a neighborhood of the presented source and target instances. During training we propose learning a data-dependent feature transform chosen from a parameterized family of displacement functions that maximizes the similarity between an arbitrary source and target instances.
%
%s for scoring similarity between an arbitrary pair of source and target data instances to account for the wide variability in target domain. Our proposed approach is based on optimizing over a parameterized family of local feature displacements that maximize the source-target similarity function. 
We learn our similarity functions from training data using {\em latent structural SVMs}. As demonstration we design a specific algorithm under the proposed framework involving bilinear similarity functions and regularized least squares as penalties for displacement.

%In such cases, sufficient source-domain feature embeddings are needed necessarily to cover the distributes for proper recognition. Otherwise, certain instances will be likely to be misclassified, such as the dog face image in Fig. \ref{fig:idea}(a). In order to deal with such failure cases, it is a good strategy to generate {\em fake} (\ie adapted) features systematically for matching. 
To illustrate how this would work, consider again Fig.~\ref{fig:idea}. In test time our proposed approach manifests as new features (\ie empty ``$+$'' in the figure) that adapt to the potential contents in target data instance. This leads to significantly richer representations than the provided source-domain vectors. Our proposed approach also induces displacements in the target domain data instances. These displaced features (\ie empty ``$\circ$'' in the figure) in turn adapt to source domain features. This process is akin to de-noising of presented target-domain data/features. As we see, by using the new features, the dog face image is correctly classified based on similarity measure between the new data-dependent adapted features, as illustrated in Fig. \ref{fig:idea}(b).

\noindent
{\bf Contributions:} 
In this paper we introduce a novel {\em adaptive similarity function} for comparing an arbitrary pair of source and target domain data instances. This function, in test time and adaptively in a data-dependent way, determines the similarity between presented source and target instances. %determines the similarity between source-target pair by maximizing similarity among all source-target candidates that are within a n features jointly for both domain data to maximize their potential matching scores. It not only measures the similarities between the adapted features but also takes the feature displacement into consideration as penalties for similarity measure. 

We propose considering optimizing over a parameterized bilinear family of functions for our cross-domain similarity measure. Alternating optimization is utilized to efficiently estimate (globally) best adapted features within a constrained family of displacements. In this context we show that the compatibility function defined in \cite{akata2013label, Akata2015} is indeed a special case of our similarity function.

To learn the parameters in our adaptive similarity function, we further propose formulating the ZSL problem using {\em latent structural SVMs}. The latent part comes from the adapted (latent) features, which are considered as the latent variables in the formulation. The structural part arises from the structures of label embeddings as did in~\cite{akata2013label, Akata2015}.

We test our approach on four benchmark image datasets for ZSL/ZSR, namely, aP\&Y, AwA, CUB and SUN-attribute. Under both standard and transductive settings, our approach outperforms the state-of-the-art significantly.

\subsection{Related Work}
In general ZSL/ZSR approaches can be divided into two categories: standard setting and transductive setting. Recently zero-shot approaches have been successfully applied to several visual tasks such as event detection \cite{wu2014zero, chang2015semantic, elhoseiny2015zero}, action recognition \cite{gan2015exploring}, and image tagging \cite{Zhang_2016_CVPR}. Below we primarily describe learning approaches in this context.

\noindent
{\bf Standard Setting:} In test time, the source-domain descriptors for unseen classes are all given at once. Our task is to sequentially recognize target-domain instances as they are revealed {\em one at a time}. %Accordingly ZSR is performed sequentially once a new unseen class data instance from target domain comes. %Approaches using this standard zero-shot setting can be further categorized into four subgroups as follows:

In this context, several works in the literature are based on training attribute classifiers which directly map target-domain data into source-domain attribute space \cite{palatucci2009zero, 10.1109/TPAMI.2013.140, mahajan2011joint, wang2013unified, yu2013designing, yu2010attribute, mensink2014costa, hariharan2012efficient, Romera-Paredes2015, Al-Halah_2016_CVPR}. The resulting attribute classifiers do not fully account for data noise in source (\eg ambiguity or mislabeling in attributes) and target (\eg large variation because of the changes of appearance, poses, \etc) domains. %Therefore these type of approaches in general cannot achieve satisfactory recognition performance.

Linear and nonlinear embedding approaches \cite{akata2013label, Akata2015,frome2013devise,norouziMBSSFCD14,socher2013zero,Li2015, Li_ICCV2015,Qiao_2016_CVPR,Ba2015, Kodirov2015, Zhang2015, zhang2016zero, Changpinyo_2016_CVPR,Xian2016Latent,bucher2016improving,wang2016relational} have attracted attention recently. The basic idea of these methods is to embed the source and target domain features into a Kronecker product embedding space. For instance, Akata \etal \cite{akata2013label, Akata2015} proposed label embedding to map class labels into a high dimensional vector space (\eg source-domain attribute space), and measure cross-domain similarities using a bilinear function whose parameters are learned using structured SVMs. Zhang and Saligrama \cite{zhang2016zero} proposed a joint learning framework to learn the latent embeddings for both domains and utilized them for similarity measure. Changpinyo \etal \cite{Changpinyo_2016_CVPR} proposed a learning method to generate synthesized classifiers for unseen classes. Bucher \etal \cite{bucher2016improving} proposed a metric learning based formulation to improve semantic embedding consistency, achieving the best performance on the four benchmark datasets under the standard setting in the current literature, to our best knowledge. The underlying assumption behind such approaches is that there exist (hidden) corresponding matches between source-domain feature vectors and target-domain data distributions, \eg one-to-one match \cite{akata2013label,Akata2015,zhang2016zero} or one-to-many match \cite{Xian2016Latent}. In this context there are other related proposed methods such as semantic transfer propagation \cite{conf/nips/RohrbachES13}, random forest based approaches \cite{jayaraman2014unreliable}, semantic manifold distance \cite{fu2015} approaches, and similarity calibration method \cite{chao2016empirical}. Nevertheless, the issue of source-domain sparsity and the resulting imbalance with target-domain data is not fully accounted for in these methods.

Our proposed method explicitly focuses on handling the scarcity issue of source-domain data by learning data-dependent latent features. This in turn accounts for the large data variation in target domain implicitly so that the cross-domain matches can be improved.

\noindent
{\bf Transductive Setting:} Recently researchers have begun to incorporate test-time unseen-class data in target domain into ZSL/ZSR as unlabeled data analogous to the transductive setting. This has led to approaches that attempt to account for domain shift \cite{Kodirov2015,embedding2014ECCV,fu2015transductive,guo2016transductive,zhang2016ECCV}. In this setting, during test time, we are given a list of all unlabelled target instances in addition to unseen-class source-domain descriptions. Potentially these methods can be used in conjunction with any similarity learning procedure trained on seen-class data, as demonstrated in \cite{zhang2016ECCV}, to score similarity between unseen classes and target domain data instances. 

While much of the focus of this paper is on the standard setting, in our experimental section we also test our learning algorithm in the transductive mode to benchmark our performance in the transductive setting. %for the transductive setting in our experiments by feeding our similarity scores to the ZSR algorithm in \cite{zhang2016ECCV}.

\section{Our Approach}

\subsection{ZSL/ZSR Problem Setup}
In the training stage, we are given a set of observed classes $\mathcal{L}_o$. For source domain, attribute vectors (or label embeddings) in the form of $\{\psi(y)\},\,\,\forall y\in\mathcal{L}_o$, are provided. Typically there exists only one vector per class. Corresponding target domain data instances $x \in \mathcal{X}$ and feature embeddings $\phi(x)$ associated with the observed source labels are also provided for training. %In target domain we are given a collection of data instances $\mathcal{X}$ with their feature embeddings $\{\phi(x)\}_{\forall x\in\mathcal{X}}$ and labels $\mathcal{Y}\subseteq\mathcal{L}^{|\mathcal{X}|}$, where $|\mathcal{X}|$ denotes the instance number. 
We aggregate training data as $\mathbb{O}=\{(\phi(x_i),\psi(y_i),y_i),\,\forall i \in \mathbb{T}\}$, where $i$ denotes the $i$-th training data instance in target domain. 

Our goal is to learn a prediction model, by leveraging observed training data, $\mathbb{O}$, such that it generalizes well to unobserved data instances and classes during test time.

In the testing stage, a set of source vectors corresponding to unobserved classes $\mathcal{L}_u$ are revealed. For a given unobserved data instance $\bar{x}$ from target domain, the task is to identify the source vector among those unobserved classes that corresponds to $\bar{x}$. Abstractly, our decision rule is based on maximizing a posterior probability (MAP) conditioned on all the available data:
\begin{align}\label{eqn:y*}
\bar{y}^{*} = \argmax_{\bar{y}\in \mathcal{L}_u } \mathbb{P}_{\psi,\phi}\left(\bar{y}\mid \bar{x};\mathbb{O}\right), \;
\end{align}
where $\mathbb{P}_{\psi,\phi}(\cdot)$ denotes the posterior probability tuned to the embeddding functions $\psi, \phi$. In what follows we drop the parameter dependence on $\psi, \phi$ for notational simplicity, since we assume that these embedding functions are provided a priori. The posterior probability is unknown and must be learned from training data. We describe our proposed approach in the following section.

\subsection{General Learning Framework}
\subsubsection{Parameterized Family of Posterior Distributions}
We face two fundamental challenges in ZSR. 

First, target instances and labels for unobserved classes are not known during training. Therefore, proposed methods must base its recognition on scoring the similarity between an arbitrary source descriptor and a target instance. 

Second, source vectors in ZSR are sparse and typically we only observe a single source vector per class. On the other hand there is significant variability in the target domain. Consequently, the source vectors serve only as ``average'' attribute descriptors across the target domain instances. The source descriptor that best matches a target instance is a vector that is typically close to but not necessarily equal to the given source domain vector. We propose optimizing over all such vectors in both learning and test time to determine the optimal matching source descriptors. 

In this context we propose a family of posterior distributions:
%$\mbox{Prob}\left(\bar{y}\mid \bar{x},\mathcal{X}, \mathcal{Y}, \phi, \psi;\delta\right),\,\delta \in \Delta$.
%
%similarity function to automatically search for the adapted features in the original feature spaces for both domains so that the potential cross-domain matching scores can be maximized. In this way we can enrich the source-domain information for recognition by generating many new data-dependent feature vectors, and thus alleviate the scarcity issue in source domain.
%
To account for relative sparseness of source domain descriptors and large variability of target domain instances we introduce new data-dependent feature vectors $\mathbf{z}_s, \mathbf{z}_t$ corresponding to source and target domains, respectively. To ensure that these feature vectors are ``typically'' close to the given source and target data pair we introduce a displacement penalty term $d_{\boldsymbol{\omega}}(x, y, \mathbf{z}_s, \mathbf{z}_t)$ parametrized by $\boldsymbol{\omega}$. To score similarity between source and target domain data we propose a scoring function, $s_{\mathbf{W}}(\mathbf{z}_s, \mathbf{z}_t)$, that scores similarity between the new data-dependent feature vectors parameterized by a matrix $\mathbf{W}$. This leads to the following posterior probability:
\begin{align}
\mathbb{P}(y,\mathbf{z}_s,\mathbf{z}_t \mid x;\mathbf{W},\boldsymbol\omega) \propto \exp(s_{\mathbf{W}}(\mathbf{z}_s, \mathbf{z}_t)-d_{\boldsymbol{\omega}}(x, y, \mathbf{z}_s, \mathbf{z}_t)).
\end{align}

In order to compute the posterior $\mathbb{P}(y|x)$ we can marginalize $\mathbb{P}(y,\mathbf{z}_s,\mathbf{z}_t \mid x;\mathbf{W},\boldsymbol\omega)$ over variables $\mathbf{z}_s\in\mathcal{Z}_s, \mathbf{z}_t\in\mathcal{Z}_t$, where $\mathcal{Z}_s, \mathcal{Z}_t$ denote their corresponding feasible domains (\eg simplex). However, in general this calculation will be very difficult given arbitrary parameter spaces, and typically Bayesian parametrization is often involved (\eg \cite{ping2014marginal}) to simplify the calculation. Alternatively the posterior can be upper-bounded by the maximum value over the variables, as did in \cite{zhang2016zero}, which can be very computationally efficient and demonstrated with good performance for ZSR as well.

Therefore, here we adopt the strategy in \cite{zhang2016zero} and take the maximum for posterior approximation purpose. This leads naturally to our {\em adaptive similarity function} as below for scoring each target data instance with a class label:
%$s_{\mathbf{W}}(\mathbf{z}_s, \mathbf{z}_t)$ parametrized by $\mathbf{W}$, and (2) displacement penalty term $d_{\boldsymbol{\omega}}(x, y, \mathbf{z}_s, \mathbf{z}_t)$ parametrized by $\boldsymbol{\omega}$. Mathematically our adaptive similarity function, $f(x, y; \mathbf{W}, \boldsymbol{\omega}) $, can be defined  as follows:
\begin{align}\label{eqn:f}
& f(x, y; \mathbf{W}, \boldsymbol{\omega}) \stackrel{\Delta}{=} \max_{\mathbf{z}_s\in\mathcal{Z}_s, \mathbf{z}_t\in\mathcal{Z}_t} \mathbb{P}(y,\mathbf{z}_s,\mathbf{z}_t \mid x;\mathbf{W},\boldsymbol\omega) \nonumber \\
& = \max_{\mathbf{z}_s\in\mathcal{Z}_s, \mathbf{z}_t\in\mathcal{Z}_t}\Big\{s_{\mathbf{W}}(\mathbf{z}_s, \mathbf{z}_t) - d_{\boldsymbol{\omega}}(x, y, \mathbf{z}_s, \mathbf{z}_t)\Big\}.
\end{align}
%where $\mathcal{Z}_s, \mathcal{Z}_t$ denote the feasible parameter spaces (\eg simplex spaces) where the adapted features will lie in for source and target domains, respectively.

Intuitively our similarity function allows the features to move from their original locations in the feature space (\ie adaptation) to achieve a higher similarity score within a neighborhood (feature displacements incur penalties). Our function in Eq. \ref{eqn:f} thus attempts to achieve a balance between these two objectives. In fact similar strategy has been widely used in deformable part models (DPM) \cite{felzenszwalb2010object}, where 2D locations for parts are considered as adapted features.

\subsubsection{Learning with Latent Structural SVMs}
The parameters $\mathbf{z}_s, \mathbf{z}_t$ in Eq. \ref{eqn:f} play the role of latent variables for given values of $\mathbf{W}, \boldsymbol{\omega}$. Consequently, we can pose the problem as a latent structural SVM problem by viewing the label variable $y$ as taking values from a structured output space:
%
%The label $y$ isin a structural space (even a simpler multi-class labelling space), we can straightforwardly formulate the learning of parameters in Eq. \ref{eqn:f} using latent structural SVMs as follows:
\begin{align}\label{eqn:general_learning}
& \min_{\mathbf{W}\in\mathcal{W}, \boldsymbol{\omega}\in\Omega, \boldsymbol{\xi}}  \mathcal{R}_1(\mathbf{W}) + \mathcal{R}_2(\boldsymbol{\omega}) + \sum_i \xi_i \\ 
& \mbox{s.t.} \;  f(x_i, y_i; \mathbf{W}, \boldsymbol{\omega}) - f(x_i, y; \mathbf{W}, \boldsymbol{\omega}) \geq \Delta(y_i, y) - \xi_i, \nonumber\\
& \hspace{5mm} \forall i, \xi_i\geq0,  x_i\in\mathcal{X}, y_i, y\in\mathcal{L}_o, \nonumber
\end{align}
where $\mathcal{R}_1, \mathcal{R}_2$ denote two regularization functions (\eg $\ell_2$-norm regularizers) for parameters $\mathbf{W}, \boldsymbol{\omega}$, respectively, $\Delta$ denotes a penalty term measuring the difference between the ground-truth label $y_i$ and an arbitrary label $y$, $\mathcal{W}, \Omega$ denote the feasible domains for $\mathbf{W}, \boldsymbol{\omega}$, respectively, and $\xi_i, \forall i$ is a slack variable. The cutting-plane algorithm \cite{yu2009learning} can be used for general training purpose. 

In test time, we replace the probability term in Eq. \ref{eqn:y*} with our adaptive similarity function in Eq. \ref{eqn:f} to rewrite the decision rule for ZSR as follows:
\begin{align}\label{eqn:y**}
\bar{y}^{*} = \argmax_{\bar{y}\in\mathcal{L}_u} f(\bar{x}, \bar{y}; \mathbf{W}, \boldsymbol{\omega}), \; \forall \bar{x}.
\end{align}

\subsection{Bilinear Adaptive Similarity Functions}
For the purpose of demonstration we describe one instance of an adaptive similarity function that can be utilized in our general learning framework.

Specifically we design the similarity term  $s_{\mathbf{W}}(\mathbf{z}_s, \mathbf{z}_t)$ in Eq. \ref{eqn:f} as a bilinear function. These type of functions have been widely used in recent ZSL literature, \eg \cite{akata2013label,Akata2015,zhang2016zero}, and has been shown to achieve state-of-the-art performance. For the penalty term, we simply adopt the regularized least square loss for the displacement. Putting these together, we propose the following adaptive similarity function:
\begin{align}\label{eqn:ff}
& f(x, y; \mathbf{W}, \boldsymbol{\omega}) = \max_{\mathbf{z}_s\in\mathcal{Z}_s, \mathbf{z}_t\in\mathcal{Z}_t}\left\{\mathbf{z}_t^T\mathbf{W}\mathbf{z}_s - \frac{\omega_1}{2}\left\|\mathbf{z}_t-\phi(x)\right\|_2^2 \right. \nonumber \\
& \hspace{5mm} \left. - \frac{\omega_2}{2}\left\|\mathbf{z}_s-\psi(y)\right\|_2^2 - \frac{\omega_3}{2}\|\mathbf{z}_t\|_2^2 - \frac{\omega_4}{2}\|\mathbf{z}_s\|_2^2 \right\},
\end{align}
where $\mathbf{W}\in\mathbb{R}^{d_t\times d_s}$ is a weighting matrix between $\mathbf{z}_t\in\mathbb{R}^{d_t}$ and $\mathbf{z}_s\in\mathbb{R}^{d_s}$, $\boldsymbol{\omega}=[\omega_1; \omega_2; \omega_3; \omega_4]$ is a 4D vector controlling the trade-off between similarity and penalty, and $\|\cdot\|_2$ denotes the $\ell_2$ norm of a vector. In general we can utilize {\em alternating optimization} (AO) to solve Eq.~\ref{eqn:ff} as follows:
\begin{align}
& \mathbf{z}_t = \argmin_{\mathbf{z}\in\mathcal{Z}_t} \left\{\omega_{13}\left\|\mathbf{z} - \left(\frac{\omega_1\phi(x)}{\omega_{13}}+\frac{\mathbf{W}\mathbf{z}_s}{\omega_{13}}\right) \right\|_2^2\right\}, \label{eqn:zt}\\
& \mathbf{z}_s = \argmin_{\mathbf{z}\in\mathcal{Z}_s} \left\{\omega_{24}\left\|\mathbf{z} - \left(\frac{\omega_2\psi(y)}{\omega_{24}}+\frac{\mathbf{z}_t^T\mathbf{W}}{\omega_{24}}\right) \right\|_2^2\right\}, \label{eqn:zs}
\end{align}
where $\omega_{13}=\omega_1+\omega_3$ and $\omega_{24}=\omega_2+\omega_4$.

Ideally, we would like to have a decision rule that during test time using Eq. \ref{eqn:ff} converges to a (unique) global solution\footnote{In this paper we only focus on utilizing convex optimization to achieve such global solutions, although minimizing concave functions over nonempty closed convex sets may result in global solutions as well, but it is much harder to be solved and, more importantly, without any guarantee.} for an arbitrary pair of source and target data instances. This is because we can then be certain that the similarity scores are unique and reliable. Therefore, below we provide some general and useful properties about of the similarity function in Eq. \ref{eqn:ff}.

%\begin{prop}[Displacement Bound]\label{prop:1}
%If $\omega_1+\omega_2>0$ and $\omega_3+\omega_4>0$ in Eq. \ref{eqn:ff}, the feature displacements are upper-bounded by $$\|\mathbf{z}_t(k) - \phi(x)\|_2^2 = \frac{\|\mathbf{W}\mathbf{z}_s(k)\|_2^2}{|\omega_1|},$$ $$\|\mathbf{z}_s(k) - \psi(y)\|_2^2 = \frac{\|\mathbf{z}_t(k)^T\mathbf{W}\|_2^2}{|\omega_2|},$$ where $\mathbf{z}_t(k), \mathbf{z}_s(k)$ denote the adapted features at iteration $k$ for the matching between  data $x$ and label $y$, respectively.
%\end{prop}

\begin{prop}[Global Optimality]\label{prop:go}
	Let us define a new matrix 
	\begin{align}\label{eqn:H}
	\mathbf{H}=\left[\begin{array}{ll}
	\omega_{13}\mathbf{I}_{d_t\times d_t} & -\mathbf{W} \\
	-\mathbf{W}^T & \omega_{24}\mathbf{I}_{d_s\times d_s}
	\end{array}\right],
	\end{align}
	where $\mathbf{I}_{d_t\times d_t}$ and $\mathbf{I}_{d_s\times d_s}$ denote two identity matrices with sizes of $d_t\times d_t$ and $d_s\times d_s$ entries, respectively. Then if $\mathbf{H}$ is positive definite (PD) and $\mathcal{Z}_s, \mathcal{Z}_t$ are nonempty closed convex sets, there exists a unique global solution for Eq. \ref{eqn:ff}.
\end{prop}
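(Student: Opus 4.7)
The plan is to transform the maximization in Eq.~\ref{eqn:ff} into an equivalent strictly convex minimization problem, and then invoke a standard existence-and-uniqueness result for convex programs. First, I would rewrite $-f(x,y;\mathbf{W},\boldsymbol{\omega})$ as the minimization of $g(\mathbf{z}_t,\mathbf{z}_s) = -\mathbf{z}_t^{T}\mathbf{W}\mathbf{z}_s + \frac{\omega_1}{2}\|\mathbf{z}_t-\phi(x)\|_2^2 + \frac{\omega_2}{2}\|\mathbf{z}_s-\psi(y)\|_2^2 + \frac{\omega_3}{2}\|\mathbf{z}_t\|_2^2 + \frac{\omega_4}{2}\|\mathbf{z}_s\|_2^2$ over $\mathcal{Z}_t\times\mathcal{Z}_s$. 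Expanding the squared-norm terms and collecting the quadratic pieces, the function takes the form
\begin{align*}
g(\mathbf{z}_t,\mathbf{z}_s) = \tfrac{1}{2}\mathbf{z}^{T}\mathbf{H}\mathbf{z} + \mathbf{b}^{T}\mathbf{z} + c,
\end{align*}
where $\mathbf{z}=[\mathbf{z}_t;\mathbf{z}_s]$, $\mathbf{H}$ is exactly the matrix in Eq.~\ref{eqn:H}, $\mathbf{b}=[-\omega_1\phi(x);\,-\omega_2\psi(y)]$, and $c$ absorbs $\tfrac{\omega_1}{2}\|\phi(x)\|_2^2 + \tfrac{\omega_2}{2}\|\psi(y)\|_2^2$. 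This algebraic identification is the one piece that requires care: I want to verify that the cross term $-\mathbf{z}_t^{T}\mathbf{W}\mathbf{z}_s$ is picked up by the symmetric off-diagonal blocks $-\mathbf{W}$ and $-\mathbf{W}^{T}$ of $\mathbf{H}$, so that $\tfrac{1}{2}\mathbf{z}^{T}\mathbf{H}\mathbf{z} = \tfrac{\omega_{13}}{2}\|\mathbf{z}_t\|_2^2 + \tfrac{\omega_{24}}{2}\|\mathbf{z}_s\|_2^2 - \mathbf{z}_t^{T}\mathbf{W}\mathbf{z}_s$.

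Once this is in place, the proof becomes short. The assumption that $\mathbf{H}$ is positive definite immediately implies that $g$ is \emph{strongly convex} with modulus $\lambda_{\min}(\mathbf{H})>0$, hence strictly convex, continuous, and coercive (grows at least quadratically as $\|\mathbf{z}\|\to\infty$). The feasible set $\mathcal{Z}_t\times\mathcal{Z}_s$ is a nonempty closed convex subset of $\mathbb{R}^{d_t+d_s}$ because the Cartesian product preserves each of these properties. Coercivity together with closedness of the feasible set implies that the infimum is attained on some bounded sublevel set (a standard Weierstrass-type argument: restrict $g$ to the closed and bounded intersection $(\mathcal{Z}_t\times\mathcal{Z}_s)\cap\{\mathbf{z}:g(\mathbf{z})\le g(\mathbf{z}_0)\}$ for any feasible $\mathbf{z}_0$, and apply continuity on a compact set), while strict convexity combined with convexity of the feasible set rules out more than one minimizer. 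Therefore the minimization of $g$---and hence the maximization in Eq.~\ref{eqn:ff}---has a unique global optimum.

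I do not anticipate a real obstacle. The only two items that deserve care are: (i) the block-matrix bookkeeping that certifies $\mathbf{H}$ is exactly the Hessian of the quadratic form assembled from the four squared-norm penalties and the bilinear term; and (ii) the argument that a strongly convex continuous function on a nonempty closed convex set attains its unique minimum, which I would dispatch in one or two lines by invoking coercivity plus strict convexity rather than reproving Weierstrass. No additional assumptions beyond those already listed in the statement are needed.
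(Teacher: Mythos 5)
Your proposal is correct and follows essentially the same route as the paper: both rewrite the problem as a quadratic program with Hessian $\mathbf{H}$ over the nonempty closed convex product set $\mathcal{Z}_t\times\mathcal{Z}_s$ and conclude existence and uniqueness from positive definiteness. The only difference is that you spell out the strong-convexity/coercivity/Weierstrass argument that the paper delegates to a textbook citation, and you handle the sign of the max-to-min conversion slightly more carefully than the paper's Eq.~\ref{eqn:fff} does.
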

\begin{proof}
	Eq. \ref{eqn:ff} can be rewritten with $\mathbf{H}$ in Eq. \ref{eqn:H} as follows:
	\begin{align}\label{eqn:fff}
	& f(x, y; \mathbf{W}, \boldsymbol{\omega}) \nonumber \\
	& = \min_{\mathbf{z}}\left\{\frac{1}{2}\mathbf{z}^T\mathbf{H}\mathbf{z} - \mathbf{z}^Tg(x,y;\boldsymbol{\omega}) + h(x, y, \boldsymbol{\omega})\right\},
	\end{align}
	where $\mathbf{z}=\left[\begin{array}{l}
	\mathbf{z}_t\\
	\mathbf{z}_s
	\end{array}\right]$, $g(x,y;\boldsymbol{\omega})=\left[\begin{array}{l}
	\omega_1\phi(x) \\ \omega_2\psi(y)
	\end{array}\right]$,  
	and $h(x, y, \boldsymbol{\omega})=\frac{\omega_1}{2}\|\phi(x)\|_2^2 + \frac{\omega_2}{2}\|\psi(y)\|_2^2$. Since $\mathcal{Z}_s, \mathcal{Z}_t$ are nonempty closed convex sets, the feasible domain for $\mathbf{z}$ is nonempty closed convex as well. Based on \cite{opac-b1108032}, we can prove this property.
\end{proof}

\begin{prop}[Global Convergence of AO]\label{prop:global}
	Under the conditions in Property \ref{prop:go}, the alternating optimization in Eq. \ref{eqn:zt} and Eq. \ref{eqn:zs} can guarantee global convergence.
\end{prop}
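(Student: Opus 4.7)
The plan is to recast the alternating optimization as two-block coordinate descent (BCD) on the strictly convex quadratic program already established in Eq. \ref{eqn:fff}. By Property \ref{prop:go}, the joint objective
\[
F(\mathbf{z}) = \tfrac{1}{2}\mathbf{z}^T\mathbf{H}\mathbf{z} - \mathbf{z}^T g(x,y;\boldsymbol{\omega}) + h(x,y,\boldsymbol{\omega})
\]
is strictly convex on the nonempty closed convex set $\mathcal{Z}_t\times\mathcal{Z}_s$ and admits a unique global minimizer $\mathbf{z}^\star=[\mathbf{z}_t^\star;\mathbf{z}_s^\star]$. My strategy is to show (i) the updates in Eq. \ref{eqn:zt}--\ref{eqn:zs} are exact block minimizers of $F$, (ii) the objective sequence generated by AO is monotone and bounded below, and (iii) every limit point of the iterates coincides with $\mathbf{z}^\star$, which forces full-sequence convergence.

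For step (i) I would complete the square in $F$ with respect to $\mathbf{z}_t$ while holding $\mathbf{z}_s$ fixed. The $\mathbf{z}_t$-dependent part is $\tfrac{\omega_{13}}{2}\|\mathbf{z}_t\|_2^2 - \mathbf{z}_t^T(\omega_1\phi(x)+\mathbf{W}\mathbf{z}_s)$ plus constants, which after completing the square matches exactly the projection problem in Eq. \ref{eqn:zt}; the symmetric calculation reproduces Eq. \ref{eqn:zs}. Since $\mathbf{H}$ is PD, its diagonal blocks $\omega_{13}\mathbf{I}$ and $\omega_{24}\mathbf{I}$ are strictly positive definite, so each subproblem is a strictly convex quadratic over a nonempty closed convex set and therefore has a unique minimizer. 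For step (ii), exact block minimization at each update gives $F(\mathbf{z}^{(k+1)})\leq F(\mathbf{z}^{(k)})$, and $F(\mathbf{z}^{(k)})\geq F(\mathbf{z}^\star)$ yields the lower bound, so $\{F(\mathbf{z}^{(k)})\}$ converges.

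For step (iii), I would invoke the standard convergence theorem for two-block BCD on a continuously differentiable convex function whose per-block minimizer is unique (e.g.\ Bertsekas, \emph{Nonlinear Programming}, Prop.~2.7.1): every limit point of the iterate sequence satisfies the blockwise first-order optimality conditions and hence, by convexity of $F$, is a global minimizer. Strict convexity of $F$ then implies that the only possible limit point is $\mathbf{z}^\star$, which upgrades subsequential convergence to full-sequence convergence of $\{\mathbf{z}^{(k)}\}$ to $\mathbf{z}^\star$.

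The main obstacle I anticipate is precisely this last step: general BCD on convex problems only guarantees subsequential convergence to stationary points, and one must be careful because the constraint sets $\mathcal{Z}_s,\mathcal{Z}_t$ are not assumed bounded (so existence of a convergent subsequence needs a separate argument). This is handled by observing that strict convexity of $F$ with a finite minimum implies the sublevel set $\{\mathbf{z}: F(\mathbf{z})\leq F(\mathbf{z}^{(0)})\}$ is compact (the PD quadratic $\tfrac{1}{2}\mathbf{z}^T\mathbf{H}\mathbf{z}$ has coercive behaviour), so the iterates stay in a compact set and a convergent subsequence exists. The uniqueness of $\mathbf{z}^\star$ from Property~\ref{prop:go} then closes the argument.
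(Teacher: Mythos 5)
Your proposal is correct and follows essentially the same route the paper takes: the paper's own proof is a two-line sketch observing that positive definiteness of $\mathbf{H}$ forces $\omega_{13}>0$ and $\omega_{24}>0$, that the block subproblems in Eq.~\ref{eqn:zt} and Eq.~\ref{eqn:zs} are therefore convex, and then appealing to Property~\ref{prop:go}. Your writeup supplies the details the paper leaves implicit --- identifying the updates as exact block minimizers of the joint strictly convex quadratic, the monotonicity and coercivity argument that keeps the iterates in a compact sublevel set, and the step from blockwise stationarity to global optimality of every limit point --- all of which are needed to make the sketch rigorous, and all of which you handle correctly.
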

\begin{proof}
	Due to matrix $\mathbf{H}$ being PD, we can have $\omega_{13}>0$ and $\omega_{24}>0$. Further since $\mathcal{Z}_s, \mathcal{Z}_t$ are nonempty closed convex sets, both Eq. \ref{eqn:zt} and Eq. \ref{eqn:zs} define convex optimization problems (see \cite{opac-b1108032}), respectively. Now based on Property \ref{prop:go} we can prove this property.
\end{proof}

\begin{prop}[Local Convergence of AO]\label{prop:local}
	If $\omega_{13}>0$, $\omega_{24}>0$, and $\mathcal{Z}_s, \mathcal{Z}_t$ are nonempty closed convex sets, then the alternating optimization in Eq. \ref{eqn:zt} and Eq. \ref{eqn:zs} can guarantee to converge to local optima.
\end{prop}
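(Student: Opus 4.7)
The plan is to adapt the standard two-block coordinate descent analysis, exploiting the fact that although the joint Hessian $\mathbf{H}$ is no longer assumed positive definite (the key distinction from Property \ref{prop:global}), the two diagonal blocks $\omega_{13}\mathbf{I}_{d_t\times d_t}$ and $\omega_{24}\mathbf{I}_{d_s\times d_s}$ remain PD. Hence the joint objective
\[F(\mathbf{z}_t,\mathbf{z}_s)=\tfrac{1}{2}\mathbf{z}^T\mathbf{H}\mathbf{z}-\mathbf{z}^T g(x,y;\boldsymbol{\omega})+h(x,y,\boldsymbol{\omega})\]
may fail to be jointly convex, yet it remains strongly convex in each block when the other is fixed, which is precisely the structure that alternating optimization exploits.

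First I would establish well-posedness: because $\omega_{13}>0$, the subproblem in Eq. \ref{eqn:zt} is an $\omega_{13}$-strongly convex quadratic program over the nonempty closed convex set $\mathcal{Z}_t$ and hence admits a unique minimizer; the identical argument with $\omega_{24}>0$ handles Eq. \ref{eqn:zs}. Thus each AO iterate $(\mathbf{z}_t^{(k)},\mathbf{z}_s^{(k)})$ is uniquely defined. Next I would show that the objective values are monotonically non-increasing: since each update is the exact block minimizer we have $F(\mathbf{z}_t^{(k+1)},\mathbf{z}_s^{(k)})\le F(\mathbf{z}_t^{(k)},\mathbf{z}_s^{(k)})$, and analogously for the $\mathbf{z}_s$ update, with strong block convexity further yielding a decrease proportional to the squared block step-length so that successive iterates grow arbitrarily close.

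Assuming the iterate sequence remains bounded (which holds automatically when $\mathcal{Z}_s,\mathcal{Z}_t$ are compact, or more generally when $F$ has bounded sublevel sets on $\mathcal{Z}_t\times\mathcal{Z}_s$), I would then extract a convergent subsequence and invoke the classical two-block BCD limit analysis (\eg Bertsekas' Proposition 2.7.1): uniqueness of block minimizers forces any limit point $(\mathbf{z}_t^*,\mathbf{z}_s^*)$ to be a coordinatewise minimum of $F$, meaning $\mathbf{z}_t^*$ minimizes $F(\cdot,\mathbf{z}_s^*)$ over $\mathcal{Z}_t$ and $\mathbf{z}_s^*$ minimizes $F(\mathbf{z}_t^*,\cdot)$ over $\mathcal{Z}_s$. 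The first-order optimality conditions for these two convex subproblems then certify a local optimum of the joint problem.

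The main obstacle is this last step: promoting a coordinatewise minimum to a genuine local optimum in the potentially nonconvex landscape. Block strong convexity supplies rigidity along the two coordinate directions, but simultaneous perturbations of both blocks couple through the off-diagonal $-\mathbf{W}$ entries of $\mathbf{H}$, so the argument must either restrict attention to the block-cyclic neighborhood naturally induced by AO or appeal to the weaker notion of local/partial optimum standard in the BCD literature.
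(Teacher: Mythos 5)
Your approach is the same one the paper relies on, but you have actually written it out: the paper's entire proof of this property is a pointer to the proof of Property~\ref{prop:global}, which in turn only observes that $\omega_{13}>0$, $\omega_{24}>0$ and the nonempty closed convexity of $\mathcal{Z}_s,\mathcal{Z}_t$ make Eq.~\ref{eqn:zt} and Eq.~\ref{eqn:zs} convex subproblems. Your well-posedness step (unique block minimizers from block strong convexity), the monotone decrease of the joint objective, and the extraction of limit points via the standard two-block BCD analysis are exactly the missing scaffolding, so in that sense the proposal is a correct and considerably more rigorous rendering of the intended argument.

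The ``main obstacle'' you flag at the end is real, and it is a gap in the proposition as stated rather than in your argument. When $\mathbf{H}$ in Eq.~\ref{eqn:H} is indefinite, the limit point guaranteed by the two-block analysis is a coordinatewise (block-optimal) point, equivalently a stationary point of the joint quadratic in Eq.~\ref{eqn:fff} over $\mathcal{Z}_t\times\mathcal{Z}_s$; for an indefinite quadratic such a point can be a saddle under simultaneous perturbation of both blocks through the off-diagonal $-\mathbf{W}$ coupling, so it need not be a local optimum in the usual sense. The paper does not address this at all, and the property is only literally true if ``local optima'' is read as ``coordinatewise/partial optima'' in the BCD sense (which is evidently the intended reading, given that Property~\ref{prop:go} is invoked separately to upgrade to global optimality when $\mathbf{H}\succ\mathbf{0}$). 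You should state that weaker conclusion explicitly rather than leaving it as an open obstacle; with that rewording your proof is complete. One further small caveat: the boundedness of iterates that you assume does need the compactness of $\mathcal{Z}_s,\mathcal{Z}_t$ (or bounded sublevel sets), since ``nonempty closed convex'' alone does not supply it when $\mathbf{H}$ is indefinite; in the paper's concrete instantiation (Eqs.~\ref{eqn:Zs}--\ref{eqn:Zt}) the sets are balls, so this is harmless, but it should be stated as a hypothesis.
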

\begin{proof}
	Please refer to the proof for Property \ref{prop:global}.
\end{proof}

\begin{prop}[Extreme Case]\label{prop:2}
	Suppose that all the vectors and matrix in Eq.~\ref{eqn:ff} are upper-bounded. Then we have
	\begin{align}
	\lim_{\omega_1, \omega_2\rightarrow+\infty, \omega_3, \omega_4\rightarrow 0} f(x, y; \mathbf{W}, \boldsymbol{\omega}) = \phi(x)^T\mathbf{W}\psi(y).
	\end{align}
\end{prop}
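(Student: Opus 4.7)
My plan is to prove the limit by a sandwich argument: I produce matching lower and upper bounds on $f(x, y; \mathbf{W}, \boldsymbol{\omega})$ that both converge to $\phi(x)^T \mathbf{W} \psi(y)$ as $\omega_1, \omega_2 \to +\infty$ and $\omega_3, \omega_4 \to 0$. The intuition is that the terms $\frac{\omega_1}{2}\|\mathbf{z}_t - \phi(x)\|_2^2$ and $\frac{\omega_2}{2}\|\mathbf{z}_s - \psi(y)\|_2^2$ act as hard quadratic penalties that pin the optimizer at $\mathbf{z}_t = \phi(x)$, $\mathbf{z}_s = \psi(y)$, while the $\omega_3, \omega_4$ regularizers become negligible.

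For the lower bound, I use the feasible choice $(\mathbf{z}_t, \mathbf{z}_s) = (\phi(x), \psi(y))$ (which is a natural modeling assumption, since the displacement penalties are centered at these points; if not, one uses the projections onto $\mathcal{Z}_t, \mathcal{Z}_s$ and the same argument goes through). Substituting into Eq.~\ref{eqn:ff} gives
\begin{align*}
f(x, y; \mathbf{W}, \boldsymbol{\omega}) \geq \phi(x)^T\mathbf{W}\psi(y) - \tfrac{\omega_3}{2}\|\phi(x)\|_2^2 - \tfrac{\omega_4}{2}\|\psi(y)\|_2^2,
\end{align*}
and since $\phi(x), \psi(y)$ are bounded by assumption, the right-hand side tends to $\phi(x)^T\mathbf{W}\psi(y)$.

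For the upper bound, let $(\mathbf{z}_t^\star, \mathbf{z}_s^\star)$ attain the maximum. Boundedness of all vectors and $\mathbf{W}$ gives a uniform constant $C$, independent of $\boldsymbol{\omega}$, such that $|\mathbf{z}_t^{\star T}\mathbf{W}\mathbf{z}_s^\star| \le C$ and $\tfrac{\omega_3}{2}\|\mathbf{z}_t^\star\|_2^2 + \tfrac{\omega_4}{2}\|\mathbf{z}_s^\star\|_2^2 \to 0$. Chaining the optimality value with the lower bound from the previous paragraph and rearranging, the two penalty terms must satisfy
\begin{align*}
\tfrac{\omega_1}{2}\|\mathbf{z}_t^\star-\phi(x)\|_2^2 + \tfrac{\omega_2}{2}\|\mathbf{z}_s^\star-\psi(y)\|_2^2 \leq C',
\end{align*}
so $\|\mathbf{z}_t^\star-\phi(x)\|_2^2 = O(1/\omega_1)$ and $\|\mathbf{z}_s^\star-\psi(y)\|_2^2 = O(1/\omega_2)$. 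Continuity of the bilinear form then forces $\mathbf{z}_t^{\star T}\mathbf{W}\mathbf{z}_s^\star \to \phi(x)^T\mathbf{W}\psi(y)$. Dropping the non-positive penalty terms from the objective gives the matching upper bound.

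The main obstacle is the bookkeeping in the upper-bound step: I have to track that the constant controlling $|\mathbf{z}_t^{\star T}\mathbf{W}\mathbf{z}_s^\star|$ and the lower-bound remainder is uniform as the four $\omega_i$'s move jointly to their limits, so that the deviation estimate really collapses to zero. This follows from the hypothesis that $\|\mathbf{z}_t\|, \|\mathbf{z}_s\|, \|\mathbf{W}\|, \|\phi(x)\|, \|\psi(y)\|$ are all bounded independently of $\boldsymbol{\omega}$. Once this uniformity is in place, combining the two bounds yields the claimed limit.
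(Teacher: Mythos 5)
Your sandwich argument is correct, and it is worth noting that the paper states this property \emph{without} any proof (unlike Properties~\ref{prop:go}--\ref{prop:local}, it has no accompanying \texttt{proof} environment), so there is no in-paper argument to compare against; your write-up supplies the missing justification. The two halves both check out: the lower bound from the feasible point $(\mathbf{z}_t,\mathbf{z}_s)=(\phi(x),\psi(y))$ leaves only the vanishing $\omega_3,\omega_4$ terms, and the upper bound correctly uses that lower bound plus the uniform bound $|\mathbf{z}_t^{\star T}\mathbf{W}\mathbf{z}_s^{\star}|\le C$ (available because the hypothesis bounds \emph{all} vectors in Eq.~\ref{eqn:ff}, hence $\mathcal{Z}_s,\mathcal{Z}_t$ themselves) to force $\|\mathbf{z}_t^{\star}-\phi(x)\|_2^2=O(1/\omega_1)$ and $\|\mathbf{z}_s^{\star}-\psi(y)\|_2^2=O(1/\omega_2)$, after which continuity of the bilinear form and dropping the non-positive penalties finish the job. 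One caveat: your parenthetical that the argument ``goes through with projections'' if $\phi(x)\notin\mathcal{Z}_t$ or $\psi(y)\notin\mathcal{Z}_s$ is not right --- in that case the displacement penalty at the projected point is bounded below by a fixed positive constant times $\omega_1$ (or $\omega_2$), so $f\rightarrow-\infty$ and the stated limit fails. The proposition therefore implicitly requires $\phi(x)\in\mathcal{Z}_t$ and $\psi(y)\in\mathcal{Z}_s$, which does hold for the paper's concrete choice of feasible sets in Eq.~\ref{eqn:Zs}--\ref{eqn:Zt} with $\gamma_s,\gamma_t$ taken sufficiently large; you should state that as an explicit hypothesis rather than wave it away.
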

From Property \ref{prop:2} we can easily see that our adaptive similarity function in Eq. \ref{eqn:ff} can be taken as the generalization of the bilinear compatibility function defined in \cite{akata2013label, Akata2015}, and so does our learning framework in Eq. \ref{eqn:general_learning} accordingly.\\

\subsection{A Specific Learning Algorithm}
With various feasible domains $\mathcal{Z}_s, \mathcal{Z}_t$, we can design different adaptive 
similarity functions accordingly. Particularly here we define 
\begin{align}
& \mathcal{Z}_s = \{\mathbf{z}_s \mid \|\mathbf{z}_s\|_2^2\leq\gamma_s, \gamma_s\geq0, \forall\mathbf{z}_s\}, \label{eqn:Zs}\\
& \mathcal{Z}_t = \{\mathbf{z}_t \mid \|\mathbf{z}_t\|_2^2\leq\gamma_t, \gamma_t\geq0, \forall\mathbf{z}_t\}. \label{eqn:Zt}
\end{align}
That is, we define $\mathcal{Z}_s, \mathcal{Z}_t$ to be sufficiently large sets which contain {\em any possible} source or target adapted feature embedding, respectively\footnote{Intuitively we can set $\gamma_s, \gamma_t\rightarrow +\infty$, \ie very large real numbers.}. Our reasoning for this choice is its simplicity and our need for high computational efficiency.

Then by setting the first derivative of $f$ over $\mathbf{z}$ to 0, \ie $\frac{\partial f}{\partial \mathbf{z}}=0$, we can easily get the close-form solution for $\mathbf{z}$, equivalently for $\mathbf{z}_s, \mathbf{z}_t$, as follows:  
\begin{align}\label{eqn:z}
\mathbf{z} = \mathbf{H}^{\dagger}g(x,y;\boldsymbol{\omega}),
\end{align}
where $\dagger$ denotes the pseudo-inverse operation.

\begin{figure}[t]
	\begin{minipage}[b]{0.325\columnwidth}
		\begin{center}
			\centerline{\includegraphics[width=1.1\columnwidth]{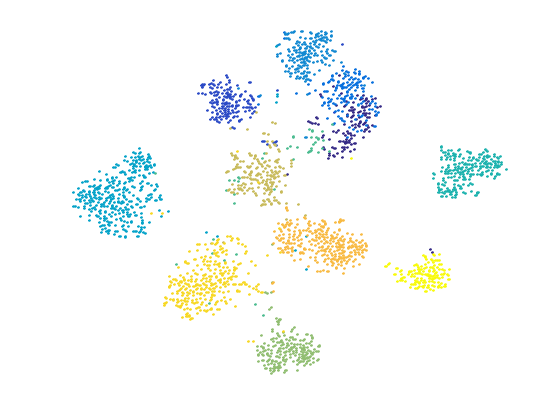}}
			\centerline{\footnotesize{(a) Original features}}
		\end{center}
	\end{minipage}
	\begin{minipage}[b]{0.325\columnwidth}
		\begin{center}
			\centerline{\includegraphics[width=1.1\columnwidth]{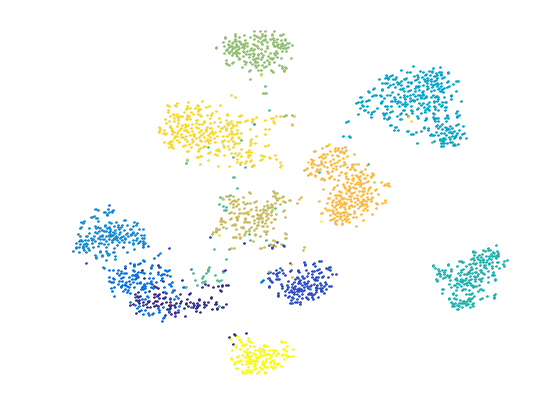}}
			\centerline{\footnotesize{(b) Adapted features $\mathbf{z}_t$}}
		\end{center}
	\end{minipage}
	\begin{minipage}[b]{0.325\columnwidth}
		\begin{center}
			\centerline{\includegraphics[width=1.1\columnwidth]{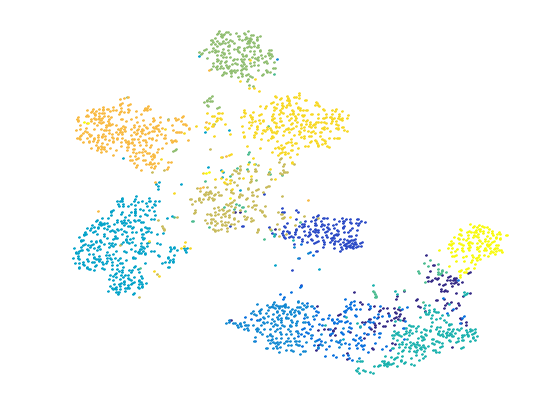}}
			\centerline{\footnotesize{(c) Adapted features $\mathbf{z}_s$}}
		\end{center}
	\end{minipage}	
	%    \vspace{-7mm}
	\caption{\footnotesize{t-SNE visualization on aP\&Y between {\bf (a)} target-domain original features, {\bf (b)} target-domain adapted features $\mathbf{z}_t$ when matching with an unseen class, and {\bf (c)} corresponding source-domain adapted features $\mathbf{z}_s$. Here each color represents a unique unseen class across the sub-figures.
	}}\label{fig:data}
\end{figure}

\noindent
{\bf Discussion:}
Eq. \ref{eqn:z} suggests a linear transform function of combining source and target information to generate the adapted features. Since $\mathbf{H}$ is PD (and thus so is $\mathbf{H}^{\dagger}$), the target domain data structures are fully preserved in $\mathbf{z}_t$ while matching with a single source domain vector. Correspondingly the target data structures will have a larger impact in generating $\mathbf{z}_s$ as well. Fig. \ref{fig:data} illustrates the distributions of different features using the test data in aP\&Y dataset, which conform with our analysis.

Next we substitute Eq. \ref{eqn:ff}, \ref{eqn:Zs} and \ref{eqn:Zt} into Eq.~\ref{eqn:general_learning} to learn the parameters in $f$. Note that in order to achieve global optimality in Property \ref{prop:go}, the learned parameters must guarantee that matrix $\mathbf{H}$ in Eq. \ref{eqn:H} is PD. This leads us to the following learning problem:
\begin{align}\label{eqn:hard-to-solve}
\min_{\mathbf{W}, \boldsymbol{\omega}, \boldsymbol{\xi}} & \; \frac{\lambda_1}{2}\|\mathbf{W}\|_F^2 + \frac{\lambda_2}{2}\|\boldsymbol{\omega}\|_2^2 + \sum_i \xi_i \\ 
\mbox{s.t.} \hspace{2mm} & \; f(x_i, y_i; \mathbf{W}, \boldsymbol{\omega}) - f(x_i, y; \mathbf{W}, \boldsymbol{\omega}) \geq \Delta(y_i, y) - \xi_i, \nonumber\\
& \; \mathbf{H}(\mathbf{W}, \boldsymbol{\omega})\succ\mathbf{0}, \nonumber\\
& \; \forall i, \xi_i\geq0,  x_i\in\mathcal{X}, y_i, y\in\mathcal{L}_o, \nonumber
\end{align}
where $\mathbf{H}(\mathbf{W}, \boldsymbol{\omega}) \equiv\mathbf{H}$ in Eq. \ref{eqn:H} with $\mathbf{W}, \boldsymbol{\omega}$ as parameters, ``$\succ\mathbf{0}$'' denotes the PD constraint which makes it very difficult to solve the problem, and $\lambda_1\geq0, \lambda_2\geq0$ are two predefined regularization parameters.

As a relaxation we tried to solve Eq. \ref{eqn:hard-to-solve} without considering the PD constraint. However, we observed empirically that the learned parameters do not always satisfy the PD constraint using AO procedure. This leads to poor recognition performance. On the other hand, if we assume that the maximum $\ell_1$ norm of the row vectors $\mathbf{W}_{i,\cdot}, \forall i$ or column vectors $\mathbf{W}_{\cdot,j}, \forall j$ in matrix $\mathbf{W}$, denoted by
\begin{align}\label{eqn:delta}
\delta_W = \max\left\{\max_i\|\mathbf{W}_{i,\cdot}\|_1, \max_j\|\mathbf{W}_{\cdot,j}\|_1\right\},
\end{align}
is non-zero and upper-bounded (which is always the case), we can obtain global optimality at least by manually setting parameter $\boldsymbol{\omega}$ so that $\omega_{13}\geq\delta_W$ and $\omega_{24}\geq\delta_W$. This creates a diagonally dominant matrix for $\mathbf{H}$ and guarantees that PD is satisfied. 

Based on this consideration, we chose not to learn parameter $\boldsymbol\omega$ but instead set it manually to guarantee the PD constraint during training. We thus only learn parameter $\mathbf{W}$. Our learning formulation can now be rewritten as follows:
\begin{align}\label{eqn:learning1}
\min_{\mathbf{W}, \boldsymbol{\xi}} & \; \frac{\lambda}{2}\|\mathbf{W}\|_F^2 + \sum_i \xi_i \\ 
\mbox{s.t.} & \; f(x_i, y_i; \mathbf{W}, \boldsymbol{\omega}^*) - f(x_i, y; \mathbf{W}, \boldsymbol{\omega}^*) \geq \Delta(y_i, y) - \xi_i, \nonumber\\
& \; \forall i, \xi_i\geq0,  x_i\in\mathcal{X}, y_i, y\in\mathcal{L}_o, \nonumber
\end{align}
where $\boldsymbol\omega^*$ denotes the predefined parameter vector and $\lambda\geq0$ is a predefined constant. Since in our experiments the current ZSR problem is essentially equivalent to a multi-class prediction problem, we simply set $\Delta(y_i, y)=1$ if $y_i\neq y$, otherwise 0. % The learning algorithm based on alternating optimization is listed in Alg. \ref{alg:1}.

% \begin{algorithm}[t]\footnotesize
% 	\SetAlgoLined
% 	\SetKwInOut{Input}{Input}\SetKwInOut{Output}{Output}
% 	\Input{target-domain training data $\{(\mathbf{x}_i, y_i)\}$ and embedding function $\phi$, source-domain label embedding function $\psi$, observed class label set $\mathcal{L}_o$, predefined parameters $\lambda$ and $\boldsymbol\omega^*$}
% 	\Output{$\mathbf{W}$}
% 	\BlankLine
% 	Initialize $\mathbf{z}_s, \mathbf{z}_t$ using $\psi(y), \phi(x)$; 	
%	
% 	\Repeat{Satisfy the stop creterion}
% 	{ 
% 		Learn $\mathbf{W}$ by solving Eq. \ref{eqn:learning1};
%% 		
%%		\If{matrix $\mathbf{H}$ in Eq. \ref{eqn:H} is not PD}		
%%		{
%%		$\mathbf{W}\leftarrow\emptyset$; {\bf break};
%%		}
%		
%		Compute $\mathbf{z}_s, \mathbf{z}_t$ using Eq. \ref{eqn:z};			 		
% 	}
% 	\Return $\mathbf{W}$
% 	\caption{Learning with Latent Structural SVMs}\label{alg:1}
% \end{algorithm} 

In test time, by substituting Eq. \ref{eqn:ff}, \ref{eqn:H} and \ref{eqn:z} into Eq. \ref{eqn:y**} we can rewrite the our decision function for ZSR as follows:
\begin{align}\label{eqn:yy}
& \bar{y}^{*}  = \argmax_{\bar{y}\in\mathcal{L}_u} F(\bar{x}, \bar{y}; \mathbf{H}^{\dagger}, \boldsymbol\omega^*) \\
& = \argmax_{\bar{y}\in\mathcal{L}_u} \left\{\frac{1}{2}g(\bar{x},\bar{y};\boldsymbol{\omega}^*)^T\mathbf{H}^{\dagger}g(\bar{x},\bar{y};\boldsymbol{\omega}^*) - h(\bar{x}, \bar{y}, \boldsymbol{\omega}^*)\right\}. \nonumber
\end{align}

\noindent
{\bf Discussion:} 
Learning based on Eq. \ref{eqn:learning1} has convergence issues due to the nature of latent structural SVMs. On the other hand, since our decisions are based on $\mathbf{H}^{\dagger}$ explicitly as in Eq.~\ref{eqn:yy}, it might be possible to learn $\mathbf{H}^{\dagger}$ approximately and efficiently by substituting similarity function $F$ in Eq.~\ref{eqn:yy} into structural SVMs \cite{joachims2009cutting}. It turns out that this learning strategy is equivalent to \cite{akata2013label,Akata2015} with source-domain feature augmentation, and thus leads to global convergence (under the multi-class prediction setting for ZSR). Empirically we tested this learning strategy and found marginal differences from \cite{akata2013label,Akata2015} in terms of recognition performance. Therefore we do not report these results in our experimental section.

\section{Experiments}
\begin{table}[t]\footnotesize
	\centering
	\setlength\tabcolsep{3pt}
	\caption{\footnotesize{Statistics of different benchmark image datasets.}}\label{tab:dataset}
	\vspace{1mm}
	\begin{tabular}{|l|lll|}
		\hline
		Dataset & \# instances & \# attributes & \# seen/unseen cls. \\
		\hline\hline  
		%CIFAR-10 & 60,000 & 25 (binary) / 50 (w) & 8 / 2 \\
		aP\&Y & 15,339 & 64 (continuous) & 20 / 12 \\
		AwA & 30,475 & 85 (continuous) & 40 / 10 \\
		CUB-200-2011 & 11,788 & 312 (binary) & 150 / 50 \\
		SUN Attribute & 14,340 & 102 (binary) & 707 / 10 \\
		\hline
	\end{tabular}
	\vspace{-3mm}
\end{table}

We follow the experimental settings in \cite{zhang2016zero}. Specifically we test our method on four benchmark image datasets for zero-shot recognition, namely, aPascal \& aYahoo (aP\&Y) \cite{farhadi2009attribute}, Animals with Attributes (AwA) \cite{citeulike:7491128}, Caltech-UCSD Birds-200-2011 (CUB-200-2011) \cite{WahCUB_200_2011}, and SUN Attribute \cite{patterson2014sun}. We summarize the statistics in each dataset and list them in Table~\ref{tab:dataset}. 

For aP\&Y, CUB-200-2011 and SUN Attribute datasets, we take the means of attribute vectors from the same classes to generate source domain data. For AwA dataset, we utilize the real-number attribute vectors since they are more discriminative. For all the datasets, we utilize MatConvNet \cite{arXiv:1412.4564} with the ``imagenet-vgg-verydeep-19'' pretrained model \cite{simonyan2014very} to extract a 4096-dim CNN feature vector (\ie the top layer hidden unit activations of the network) for each image (or bounding box). As suggested in \cite{zhang2016zero} and with the same parameters we conduct dimension reduction for target-domain data and sparse coding for source-domain attribute vectors as well. All the predefined parameters in our method are tuned using cross-validation, similar to \cite{Zhang2015,zhang2016zero}. We report our results averaged over 10 trials.

We utilize the same standard training/testing splits for zero-shot recognition on aP\&Y and AwA as others, defined in these datasets. For CUB-200-2011, we follow \cite{akata2013label} and use the same 150 bird species as seen classes for training, and the other 50 species as unseen classes for testing. For SUN Attribute, we follow \cite{jayaraman2014unreliable} and use the same 10 classes as unseen classes for testing (see their supplementary file), and the rest of them as seen classes for training.

On the four datasets, we evaluate the performance of the proposed approach in terms of recognition under the following two different settings:

%\noindent
%{\bf Zero-Shot Recognition:} 
%This task is fundamentally about classification when each target data instance is presented. % in a streaming fashion. Apart from comparison to the state-of-art, we also evaluate the importance different components of our system. 
%%We report the performance averaged over 3 trials.

%\noindent
%{\bf Zero-Shot Retrieval:} While retrieval is related to classification, it measures different aspects of the system, \ie which target domain samples are matched to a given source domain vector. To adapt our system to retrieval, given a source domain unseen class attribute vector we compute the similarities for all the unseen target domain data and sort the similarity scores. We then compute precision, recall, average precision (AP) \etc to measure retrieval accuracy.

\noindent
{\bf Standard \vs Transductive}. Our goal here is to benchmark the performance gains with different types of side-information in classification. Standard setting represents an extreme streaming situation. %where during training we do not have data corresponding to unseen classes, and in test-time we must evaluate target instances one-by-one in a streaming fashion. 
Transductive setting provides a less extreme scenario where during test-time we are given target instances all at once as in a batch-mode. The batch mode clearly provides information about the target data/feature distribution. The question arises as to how much we could benefit from this type of information. In this context we propose taking the similarities from our approach as inputs to the method proposed in \cite{zhang2016ECCV} for recognition. Here we report our results averaged over 10 trials. In each trial we run our integrated approach for another 50 times and record the average as probabilities over unseen classes per target data. We predict class labels and report our performance based on this assignment probability matrix in each trial.

%While one could consider other extreme cases, such as one-shot or few-shot recognition, within our framework, but we do not do so here. 

\subsection{Parameter Selection}
\begin{figure}[t]
	\begin{minipage}[b]{0.49\linewidth}
		\begin{center}
			\centerline{\includegraphics[width=.95\linewidth]{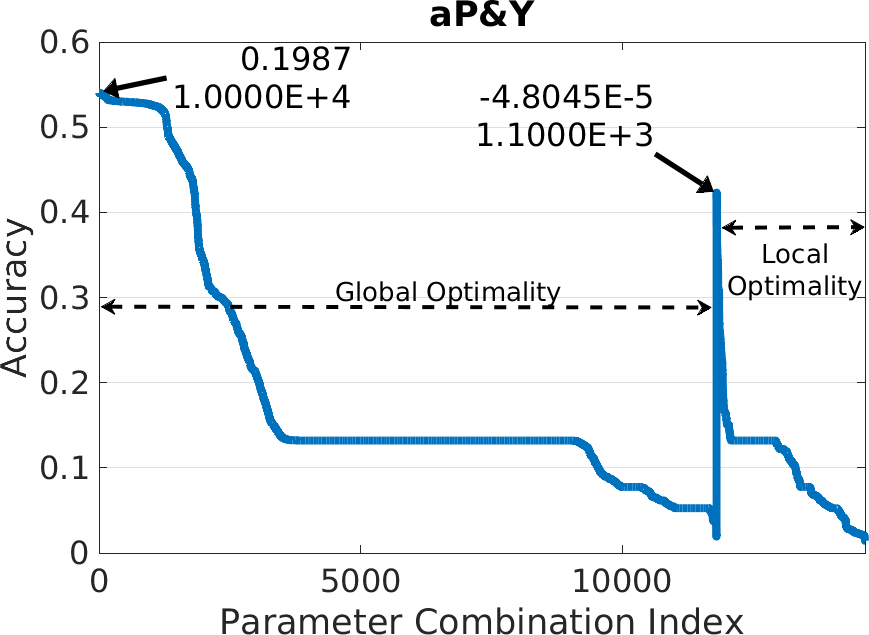}}
			%			\centerline{\footnotesize{(a) 4D tuple of $(\omega_4, \omega_3, \omega_2, \omega_1)$}}
		\end{center}
	\end{minipage}
	\begin{minipage}[b]{0.49\linewidth}
		\begin{center}
			\centerline{\includegraphics[width=.95\linewidth]{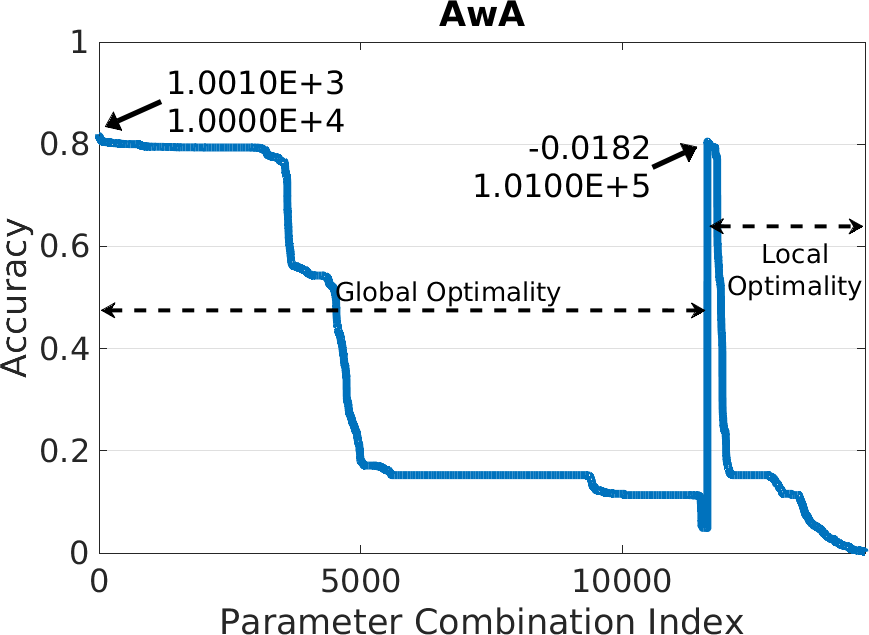}}
			%			\centerline{\footnotesize{(b) 4D tuple of $(\omega_2, \omega_1, \omega_4, \omega_3)$}}
		\end{center}
	\end{minipage}	
	\begin{minipage}[b]{0.49\linewidth}
		\begin{center}
			\centerline{\includegraphics[width=.95\linewidth]{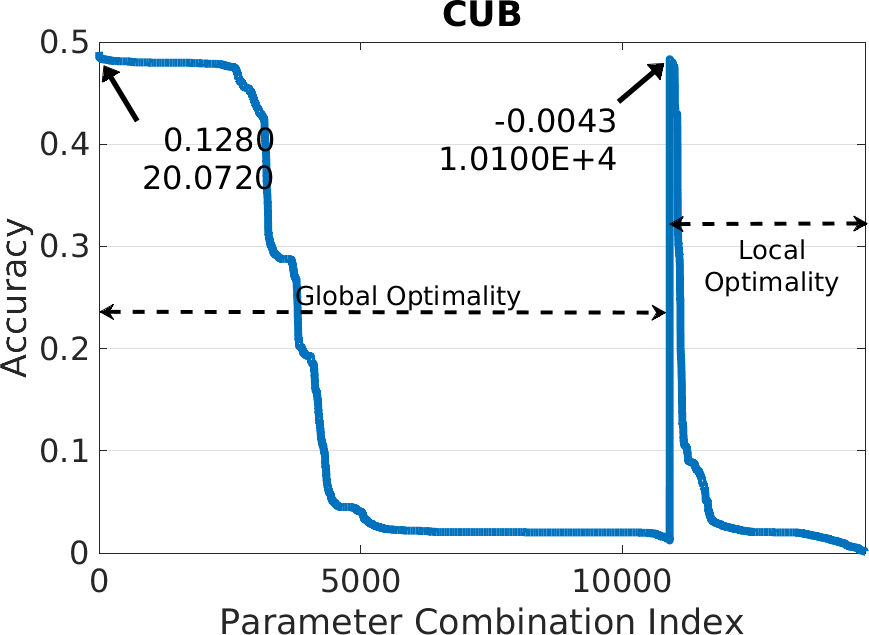}}
			%			\centerline{\footnotesize{(c) 4D tuple of $(\omega_4, \omega_3, \omega_2, \omega_1)$}}
		\end{center}
	\end{minipage}
	\begin{minipage}[b]{0.49\linewidth}
		\begin{center}
			\centerline{\includegraphics[width=.95\linewidth]{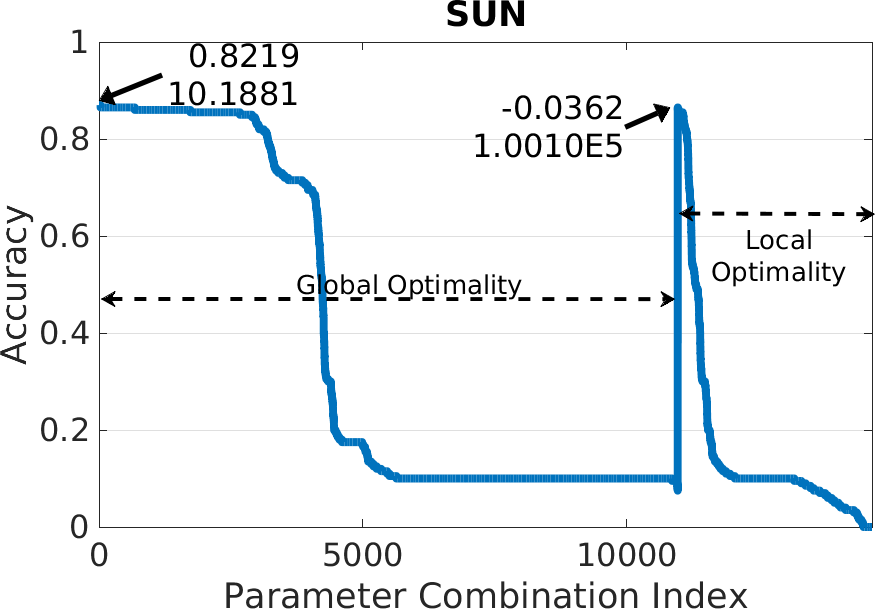}}
			%			\centerline{\footnotesize{(d) 4D tuple of $(\omega_2, \omega_1, \omega_4, \omega_3)$}}
		\end{center}
	\end{minipage}	
	%    \vspace{-3mm}
	\caption{\footnotesize{{\bf Global optimality \vs local optimality:} Performance comparison over different parameter combinations of $\boldsymbol{\omega}$. For both cases the reported results here have been sorted in a descending order. The numbers on the curves are the associated (top) smallest and (bottom) largest eigenvalues of matrix $\mathbf{H}$ with the best results.
	}}\label{fig:PD}
	\vspace{-3mm}
\end{figure}

In our method there are 5 predefined parameters, \ie $\lambda\geq0$ and $\boldsymbol{\omega}=[\omega_1; \omega_2; \omega_3; \omega_4]$. In this section, we will investigate the impact of $\boldsymbol{\omega}$ on recognition accuracy while fixing $\lambda=1$ without fine-tuning. Specifically we conduct a grid search over $10^{-5:5}$ for each parameter in $\boldsymbol{\omega}$, \ie 11 choices per parameter and $11^4=14,641$ parameter combinations in total. In this way our adaptive similarity function in Eq. \ref{eqn:ff} always converges to a local maxima (\ie local optimality) because of Property \ref{prop:local}. Further when the corresponding matrix $\mathbf{H}$ in Eq. \ref{eqn:H} is PD, our function achieves global maxima (\ie global optimality) because of Property~\ref{prop:global}. We utilize the smallest eigenvalue of $\mathbf{H}$ as an indicator of being PD if it is positive, otherwise not.

\begin{figure}[t]
	\begin{minipage}[b]{\linewidth}
		\begin{center}
			\centerline{\includegraphics[width=\linewidth]{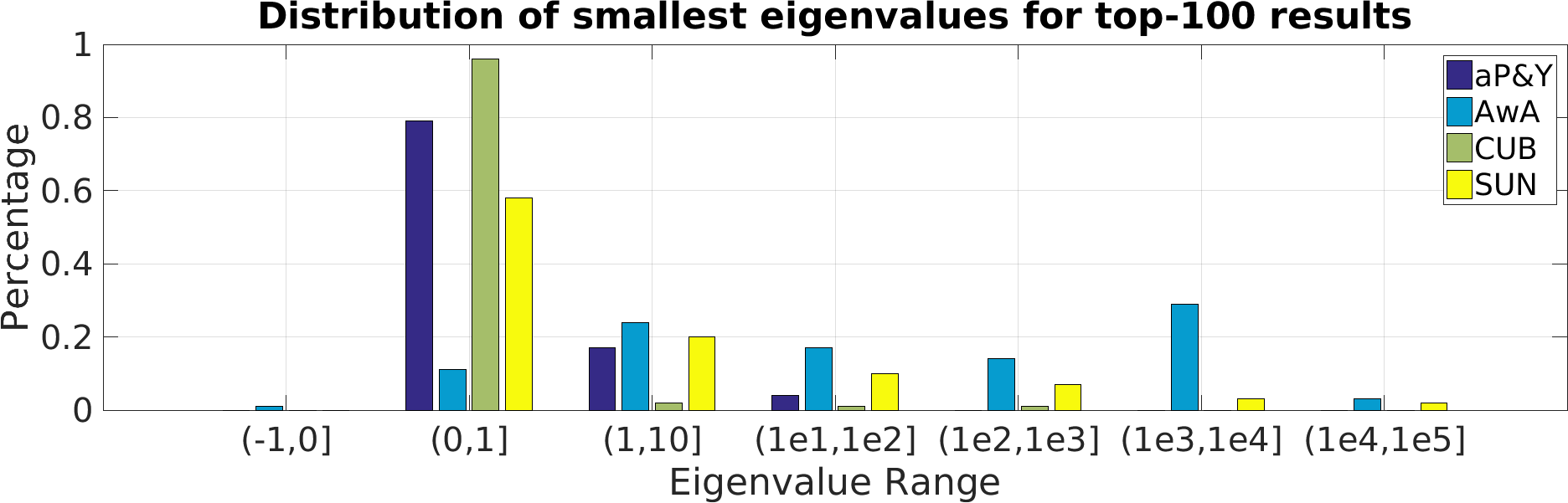}}
			%			\centerline{\footnotesize{(a) 4D tuple of $(\omega_4, \omega_3, \omega_2, \omega_1)$}}
		\end{center}
	\end{minipage}
	\vfill
	\begin{minipage}[b]{\linewidth}
		\begin{center}
			\centerline{\includegraphics[width=\linewidth]{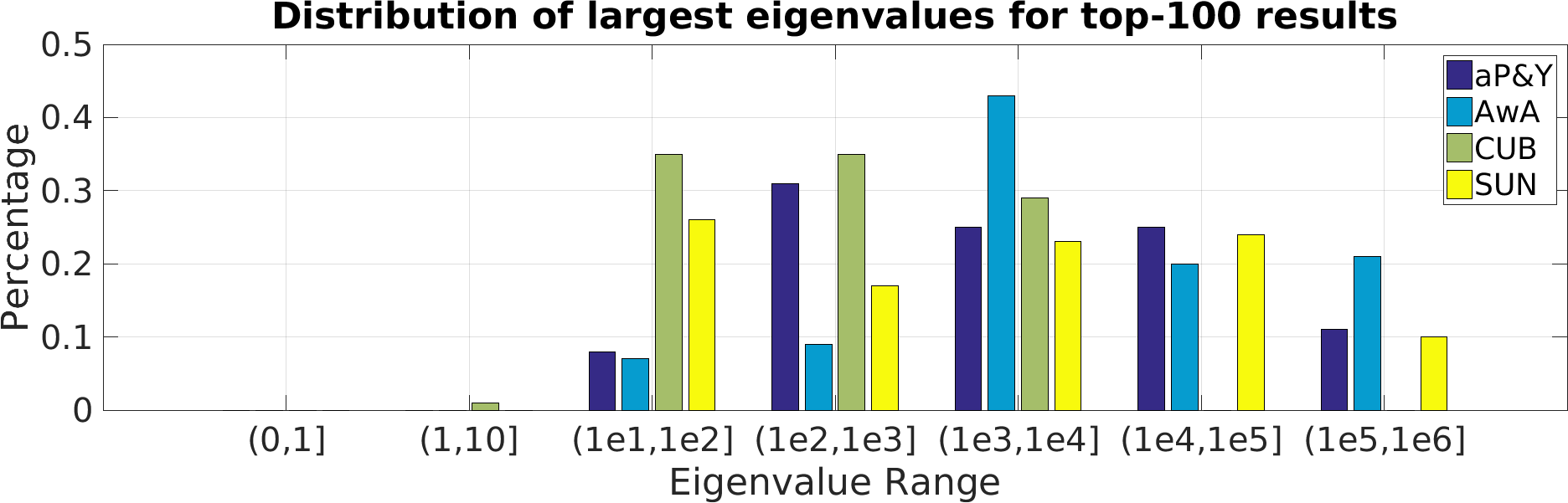}}
			%			\centerline{\footnotesize{(b) 4D tuple of $(\omega_2, \omega_1, \omega_4, \omega_3)$}}
		\end{center}
	\end{minipage}	
	\vspace{-7mm}
	\caption{\footnotesize{Illustration of distributions of {\bf (top)} smallest and {\bf (bottom)} largest eigenvalues for the corresponding top-100 results on each dataset.
	}}
	\label{fig:param}
	\vspace{-3mm}
\end{figure}

First we investigate the effect of global/local optimality on test-time recognition accuracy, and illustrate the results in Fig.~\ref{fig:PD}. Clearly we can observe that (1) {\em Quality:} The best global optimality results outperform their local optimality counterparts. (2) {\em Robustness:} The highest performing global optimality solution over different parameter combinations is more robust. This is indicated by the fact that for global optimality the curves for the top performance are going down very slowly, forming wide and relatively flat ranges, while for local optimality the curves are going down rapidly. The robustness here indeed suggests that with global optimality the parameter selection for $\boldsymbol{\omega}$ could have a sufficient number of choices, making it relatively easy. (3) {\em Eigenvalues:} In general the smallest eigenvalues tend to be close to 0, but the largest eigenvalues tend to be very large.

In order to accelerate the process of parameter selection, we provide some insights by looking at the distributions of both smallest and largest eigenvalues, as illustrated in Fig.~\ref{fig:param}. Generally speaking, smallest eigenvalues tend to concentrate on the range $(0, 1]$, while largest ones are uniformly distributed between 10 to $10^6$ with slightly better focus on the range $(10^3, 10^4]$. This in turn suggests that for our method a good parameter combination for $\boldsymbol{\omega}$ may lead to a PD matrix $\mathbf{H}$ (with high probability) whose smallest and largest eigenvalues lie in $(0,1]$ and $(10^3, 10^4]$, respectively. By quickly checking this condition, we can easily rule out most potential combinations. Further the big difference between the smallest and largest eigenvalues indicates a big difference between $\omega_{13}$ and $\omega_{24}$ as well. Moreover, if $\delta_W$ in Eq. \ref{eqn:delta} can be estimated, we may select parameter combinations more efficiently by constructing diagonal dominant matrices intentionally. This shows that the special structure of $\mathbf{H}$ may provide important information to guide parameter selection.

\subsection{Zero-Shot Recognition}
In this section we compare our method, denoted by JFA, with other existing ZSL/ZSR approaches. This task is fundamentally about classification when a single target data instance is presented in test time. 

\begin{figure*}[t]
	%	\begin{minipage}[b]{\textwidth}
	%		\begin{center}
	%			\centerline{\includegraphics[width=.9\columnwidth]{pre_cub.png}}
	%			% \centerline{\scriptsize{(a) CUB}}
	%		\end{center}
	%	\end{minipage}
	\begin{minipage}[b]{0.325\textwidth}
		\begin{center}
			\centerline{\includegraphics[width=\columnwidth]{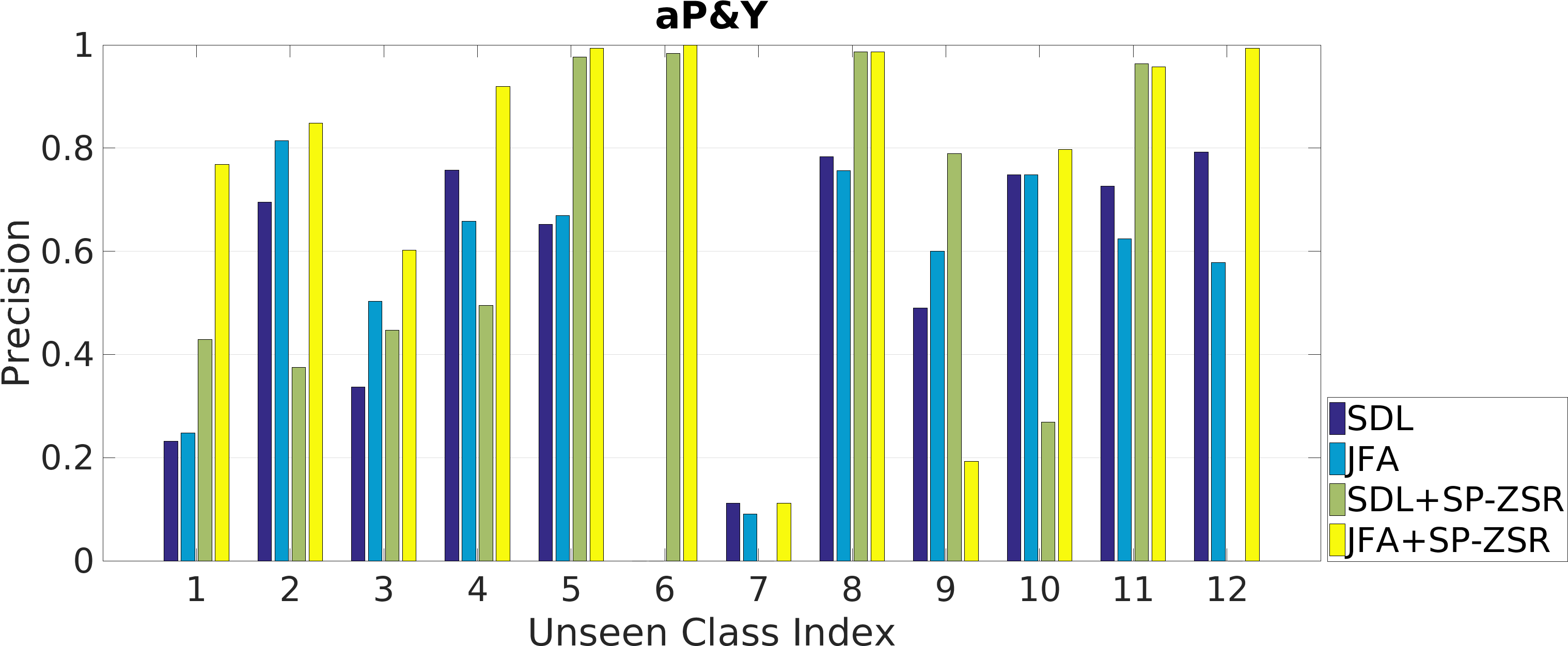}}
			% \centerline{\scriptsize{(b) aP\&Y}}
		\end{center}
	\end{minipage}
	\begin{minipage}[b]{0.325\textwidth}
		% \vspace{-30pt}
		\begin{center}
			\centerline{\includegraphics[width=\columnwidth]{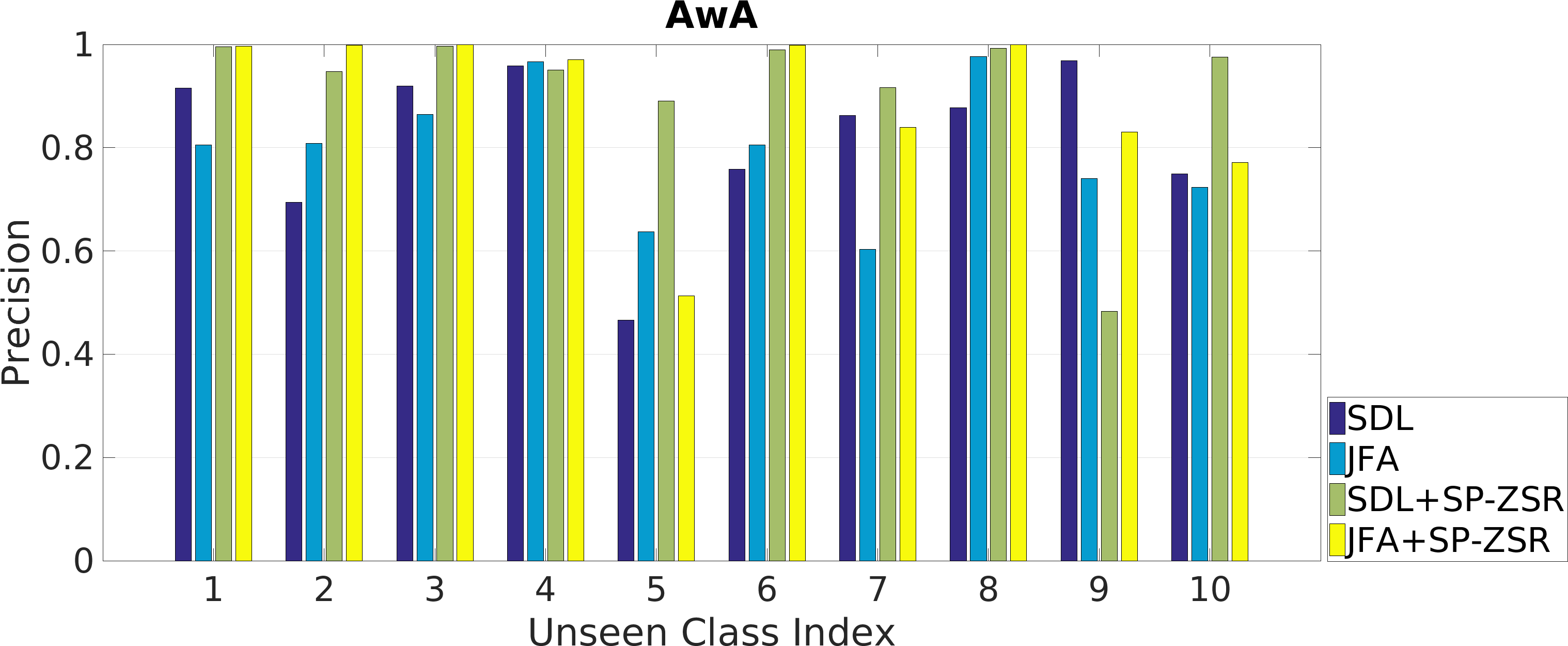}}
			% \centerline{\scriptsize{(c)} AwA}
		\end{center}
	\end{minipage}
	\begin{minipage}[b]{0.325\textwidth}
		\begin{center}
			\centerline{\includegraphics[width=\columnwidth]{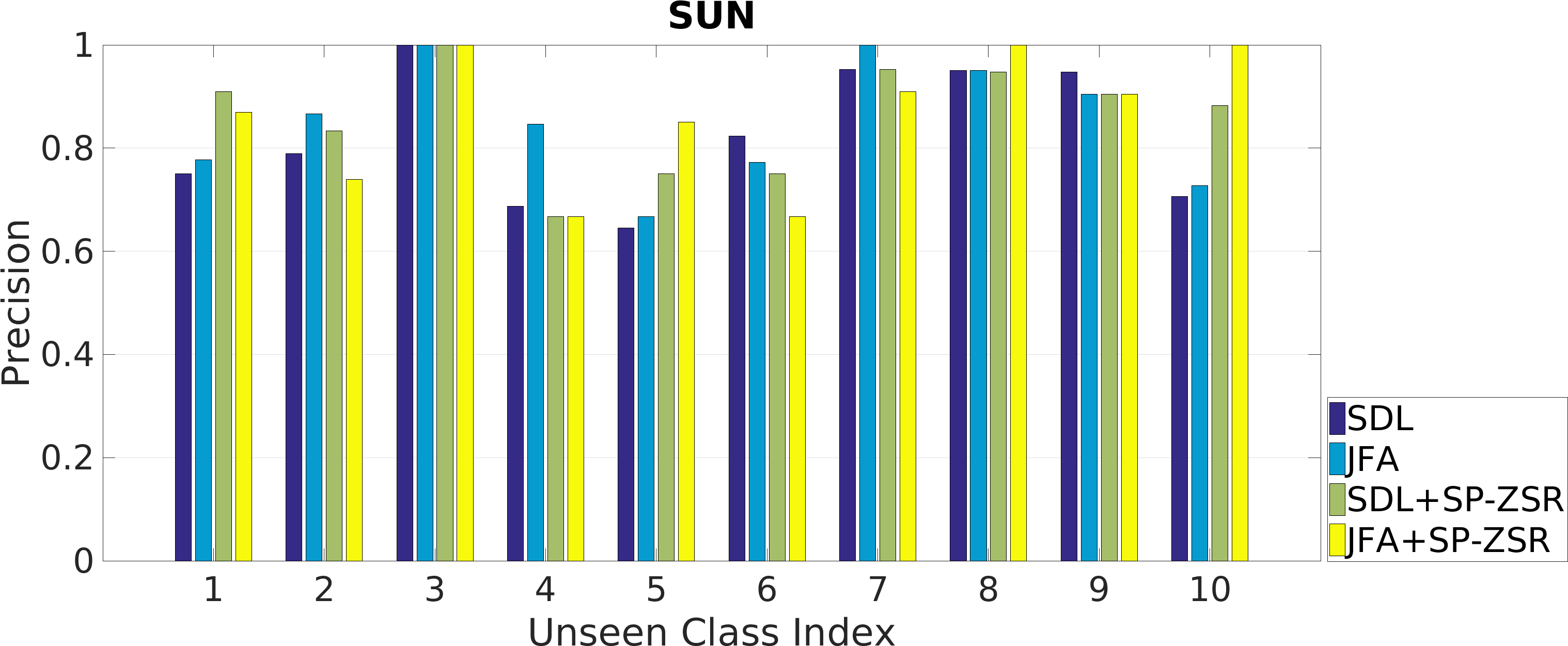}}
			% \centerline{\scriptsize{(d)} SUN}
		\end{center}
	\end{minipage}
	\begin{minipage}[b]{0.325\textwidth}
		\begin{center}
			\centerline{\includegraphics[width=\columnwidth]{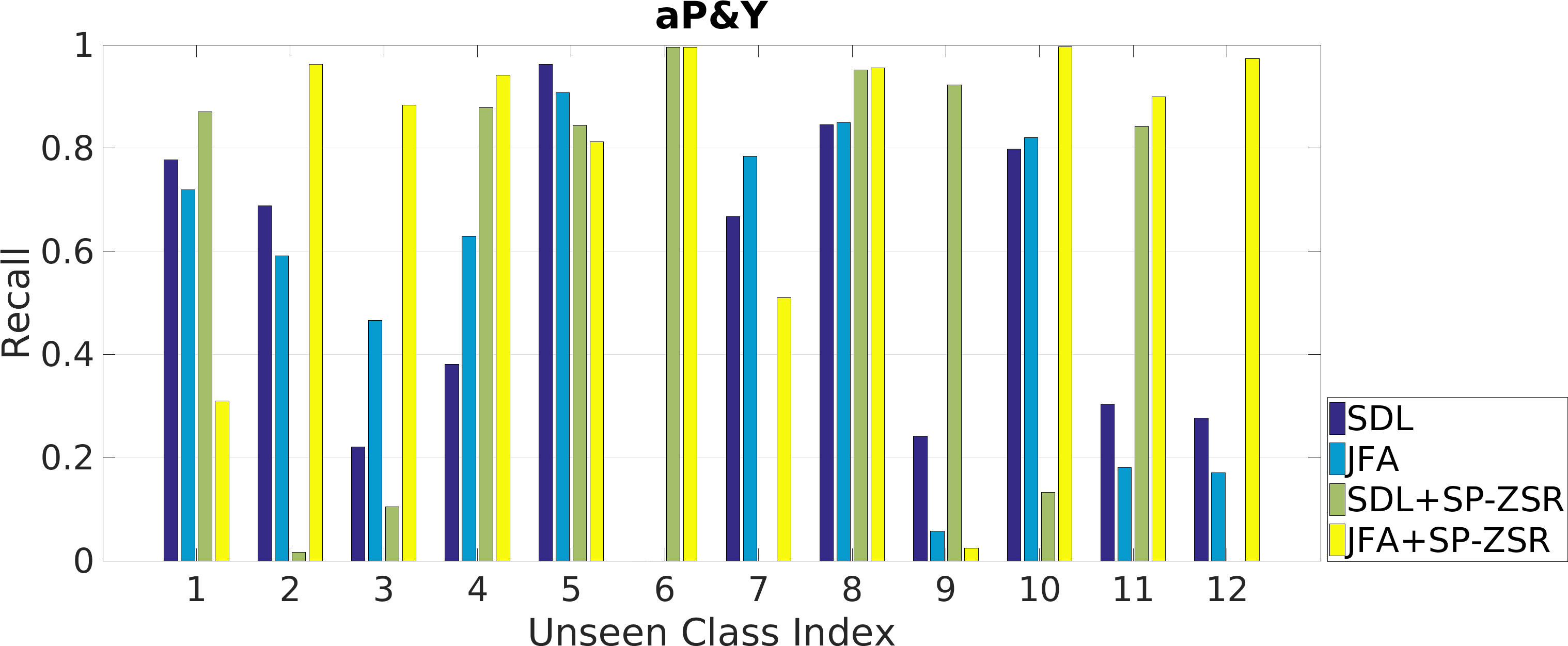}}
			% \centerline{\scriptsize{(b) aP\&Y}}
		\end{center}
	\end{minipage}
	\begin{minipage}[b]{0.325\textwidth}
		% \vspace{-30pt}
		\begin{center}
			\centerline{\includegraphics[width=\columnwidth]{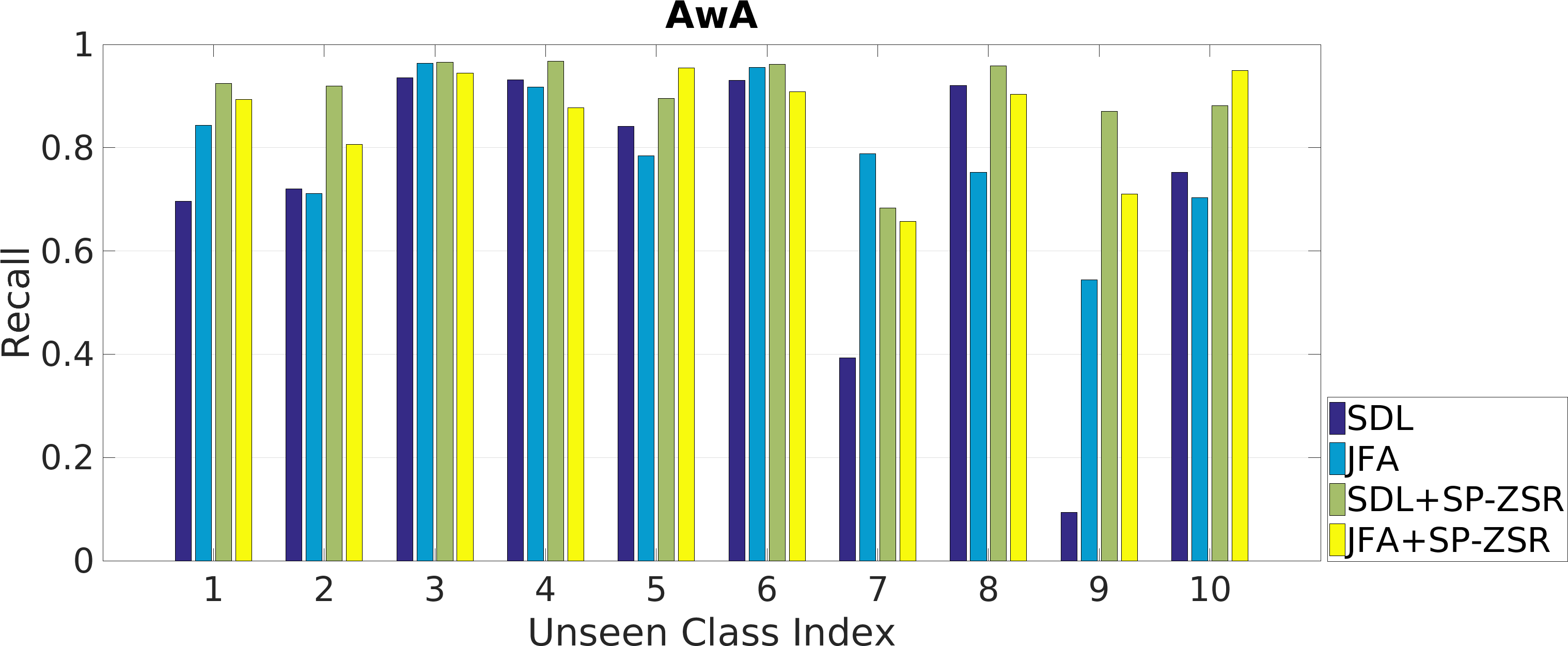}}
			% \centerline{\scriptsize{(c)} AwA}
		\end{center}
	\end{minipage}
	\begin{minipage}[b]{0.325\textwidth}
		\begin{center}
			\centerline{\includegraphics[width=\columnwidth]{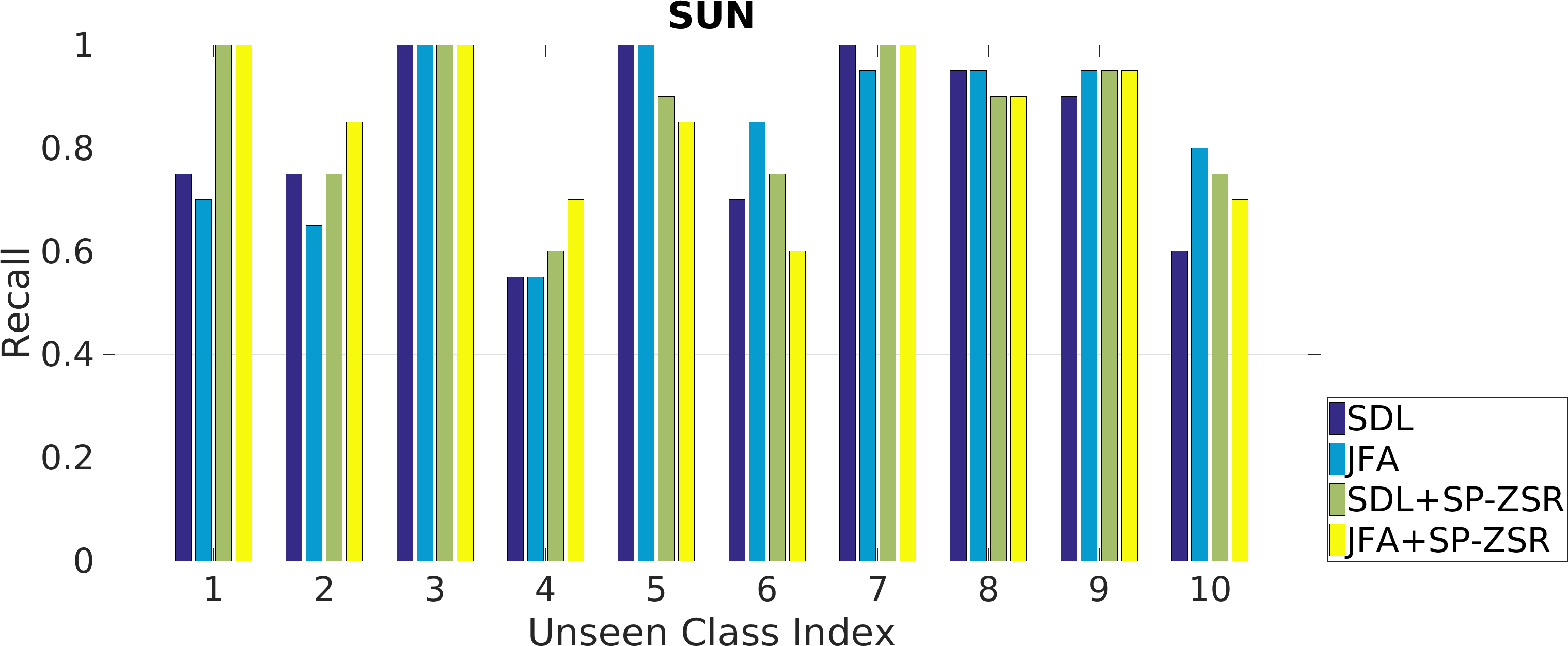}}
			% \centerline{\scriptsize{(d)} SUN}
		\end{center}
	\end{minipage}
	\vspace{-3mm}
	\caption{\footnotesize{Performance comparison in terms of {\bf (top)} precision and {\bf (bottom)} recall among unseen classes indexed by the orders in the corresponding datasets. The numbers for the competitors are cited from \cite{zhang2016ECCV}.}}\label{fig:recog-pre}
	\vspace{-3mm}
\end{figure*}

\subsubsection{Standard Setting}
\begin{table}[t]\scriptsize
	\begin{minipage}{\linewidth}
		\centering
		\caption{\footnotesize{Traditional zero-shot recognition accuracy comparison (\%) with cited numbers in the form of mean$\pm$standard deviation, grouped by the image feature types of {\bf (top)} handcrafted features, {\bf (middle)} other deep learning features, and {\bf (bottom)} vgg-verydeep-19. Blanks indicate no reports for the datasets in the original papers.}}\label{tab:standard_recognition}
		\vspace{1mm}
		\setlength\tabcolsep{3pt}
		\begin{tabular}{|lllll|}
			\hline
			Method & aP\&Y & AwA & CUB & SUN \\
			\hline\hline
			Farhadi \etal \cite{farhadi2009attribute} & 32.5 & & & \\
			Mahajan \etal \cite{mahajan2011joint} & 37.93 & & & \\
			Wang and Ji \cite{wang2013unified} & 45.05 & 42.78 & & \\
			Rohrbach \etal \cite{conf/nips/RohrbachES13} & & 42.7 & & \\
			Yu \etal \cite{yu2013designing} & & 48.30 & & \\
			Akata \etal \cite{akata2013label} & & 43.5 & 18.0 & \\
			Mensink \etal \cite{mensink2014costa} & & & 14.4 & \\
			Lampert \etal \cite{10.1109/TPAMI.2013.140} & 19.1 & 40.5 & & 52.50 \\
			J. and Grauman \cite{jayaraman2014unreliable} & 26.02$\pm$0.05 & 43.01$\pm$0.07 & & 56.18$\pm$0.27 \\
			R.-P. and Torr \cite{Romera-Paredes2015} & 27.27$\pm$1.62 & 49.30$\pm$0.21 & & 65.75$\pm$0.51 \\
			\hline\hline
			%AlexNet \cite{NIPS2012_4824} &
			Akata \etal \cite{Akata2015} & & 66.7 & \textbf{\em 50.1} & \\
			Qiao \etal \cite{Qiao_2016_CVPR} & & 66.46$\pm$0.42 & 29.00$\pm$0.28 & \\
			Changpinyo \etal \cite{Changpinyo_2016_CVPR} & & 72.9 & & \\		
			Xian \etal \cite{Xian2016Latent} & & 71.9 & 45.5 &  \\
			Wang \etal \cite{wang2016relational} & & 75.99 & 33.48 & \\
			\hline\hline
			Lampert \etal \cite{10.1109/TPAMI.2013.140} & 38.16 & 57.23 & & 72.00 \\
			R.-P. and Torr \cite{Romera-Paredes2015} & 24.22$\pm$2.89 & 75.32$\pm$2.28 & & 82.10$\pm$0.32 \\
			SSE-INT \cite{Zhang2015} & 44.15$\pm$0.34 & 71.52$\pm$0.79 & 30.19$\pm$0.59 & 82.17$\pm$0.76\\
			SSE-ReLU \cite{Zhang2015} & 46.23$\pm$0.53 & 76.33$\pm$0.83 & 30.41$\pm$0.20 & 82.50$\pm$1.32\\
			SDL \cite{zhang2016zero} & 50.35$\pm$2.97 & 79.12$\pm$0.53 & 41.78$\pm$0.52 & 83.83$\pm$0.29 \\			
			Bucher \etal \cite{bucher2016improving} & \textbf{\em 53.15$\pm$0.88} & 77.32$\pm$1.03 & 43.29$\pm$0.38 & \textbf{\em 84.41$\pm$0.71} \\
			{\bf Ours: JFA} & 52.04$\pm$1.35 & \textbf{\em 81.03$\pm$0.88} & 46.48$\pm$1.67 & 84.10$\pm$1.51 \\
			\hline
		\end{tabular}
	\end{minipage}
	%\vspace{-3mm}
\end{table}

Table \ref{tab:standard_recognition} summarizes our comparison under the standard ZSR setting. Clearly ZSL methods leverage the advantages of deep learning features and achieve much better performance than those using handcrafted features. Different deep learning features can achieve comparable performance. Among all the competitors using vgg-verydeep-19 features, our method works the best, outperforming the state-of-the-art \cite{bucher2016improving} by 1.34\% on average. Notice that our experimental setting is exactly the same as \cite{zhang2016zero}, and in this case our method outperforms \cite{zhang2016zero} significantly by 2.11\%, but our standard deviations are slightly higher than \cite{zhang2016zero}. The improvement comes from the nature of adaptive matching with better similarities, while the downside is mainly because our learning algorithm in Eq.~\ref{eqn:learning1} does not converge globally, leading to different local solutions even given the same training data. 

To see this, let us take the CUB dataset for example. Initially CUB is created for fine-grained classification problems with the help of attributes, because some bird species look visually very similar but still have their own unique characteristics. This leads to a more descriptive attribute vector per image than the average. In this context our adaptive matching tries to estimate the individual attribute vector from the average attribute vector of the class for matching based on the visual information. As we see on CUB, our method improves \cite{zhang2016zero} by 4.70\% in terms of accuracy, equivalently 11.25\% relative improvement.

\subsubsection{Transductive Setting}
\begin{table}[t]\scriptsize
	\begin{minipage}{\linewidth}
		\centering
		\caption{\footnotesize{Transductive zero-shot recognition accuracy comparison (\%) with cited numbers in the form of mean$\pm$standard deviation.}}\label{tab:transductive_recognition}
		\vspace{1mm}
		\setlength\tabcolsep{2.3pt}
		\begin{tabular}{|lllll|}
			\hline
			Method & aP\&Y & AwA & CUB & SUN \\
			\hline\hline			
			Fu \etal \cite{embedding2014ECCV} & & 47.1 & & \\			
			Fu \etal \cite{fu2015transductive} &  & 80.5 & 47.9 &  \\
			Kodirov \etal \cite{Kodirov2015} &  & 75.6 & 40.2 & \\
			Guo \etal \cite{guo2016transductive} & 39.03 & 78.47 & & 82.00 \\
			%			\hline\hline
			R.\&T. \cite{Romera-Paredes2015}+SP-ZSR \cite{zhang2016ECCV} & 37.5 &
			84.3 & & 89.5 \\
			SDL \cite{zhang2016zero}+SP-ZSR \cite{zhang2016ECCV} &  62.19$\pm$4.65 & \textbf{\em 92.08$\pm$0.14} & 55.34$\pm$0.77 & \textbf{\em 86.12$\pm$0.99} \\
			(BL-ZSL+SP-ZSR) \cite{zhang2016ECCV} & 69.74$\pm$3.47 & 92.06$\pm$0.18 & 53.26$\pm$1.04 & 86.01$\pm$1.32 \\
			{\bf Ours: JFA+SP-ZSR \cite{zhang2016ECCV}} & \textbf{\em 80.89$\pm$5.97} & 88.04$\pm$0.69 & \textbf{\em  55.81$\pm$1.37} & 85.35$\pm$1.56 \\
			\hline
		\end{tabular}
	\end{minipage}
	%	\vspace{-3mm}
\end{table}

\begin{table}[t]\scriptsize
	\begin{minipage}{\linewidth}
		\centering
		\setlength\tabcolsep{1.8pt}
		\caption{\footnotesize{Average precision and recall comparison (\%) for recognition.}}\label{tab:pre-rec}
		\vspace{1mm}
		\begin{tabular}{|lllll|}
			\hline
			Precision & aP\&Y & AwA & CUB & SUN \\
			\hline
			SDL \cite{zhang2016zero} & 52.70$\pm$27.33 & 81.70$\pm$14.67 & 54.06$\pm$24.13 & 82.51$\pm$12.24 \\
			{\bf Ours: JFA} & 52.41$\pm$25.59 & 79.31$\pm$11.70 & 53.73$\pm$23.90 & 85.12$\pm$10.87 \\
			SDL \cite{zhang2016zero}+SP-ZSR \cite{zhang2016ECCV} & 55.96$\pm$35.72 & \textbf{\em 91.37$\pm$14.75} & 57.09$\pm$27.91 & 85.96$\pm$10.15 \\						
			{\bf Ours: JFA+SP-ZSR\cite{zhang2016ECCV}} & \textbf{\em 76.41$\pm$29.70} & 89.19$\pm$15.09 & \textbf{\em 57.20$\pm$25.96} & \textbf{\em 86.06$\pm$12.36} \\			
			\hline\hline
			Recall & & & & \\
			\hline
			SDL \cite{zhang2016zero} & 51.34$\pm$29.69 & 72.14$\pm$26.29 & 45.05$\pm$26.16 & 82.00$\pm$16.31 \\
			{\bf Ours: JFA} & 51.48$\pm$31.61 & 79.63$\pm$12.34 & 46.98$\pm$29.81 & 84.00$\pm$15.13 \\	
			SDL \cite{zhang2016zero}+SP-ZSR \cite{zhang2016ECCV} & 54.66$\pm$42.27 & \textbf{\em 90.28$\pm$8.08} & 55.73$\pm$31.80 & \textbf{\em 86.00$\pm$13.19} \\								
			{\bf Ours: JFA+SP-ZSR\cite{zhang2016ECCV}} & \textbf{\em 77.20$\pm$30.45} & 86.04$\pm$9.82 & \textbf{\em 55.77$\pm$26.54} & 85.50$\pm$13.68 \\
			\hline
		\end{tabular}
	\end{minipage}
	\vspace{-3mm}
\end{table}

For transductive setting, we list our comparison results in Table \ref{tab:transductive_recognition}. Overall, our method outperforms the state-of-the-art \cite{zhang2016ECCV} significantly by 2.26\% on average. It is worth mentioning that on aP\&Y by substituting our similarity scores in \cite{zhang2016ECCV} we can achieve 80.89\% in terms of accuracy and outperform the state-of-the-art significantly by 11.15\%. Analogous to the results of the traditional setting, we observe that the standard deviations of our results are slightly higher than those of the competitors.

To better compare our results, we further measure the class-level performance on the datasets in terms of precision and recall (equivalent to accuracy per class). The detailed comparisons are illustrated in Fig. \ref{fig:recog-pre} without the CUB dataset due to the space limit. We summarize the averaged performance across different classes on each dataset in Table~\ref{tab:pre-rec}. Overall at the class level our method behaves similar to \cite{zhang2016zero} with the same inputs. However, as we see there exists no single dominant method over all the datasets and uniformly over all classes on each dataset. Better similarity measure does not necessarily lead to better performance under either standard or transductive setting. It could be interesting as future work to see whether we can improve the ZSR performance further by integrating different similarity metrics.

\section{Conclusion}
In this paper we solve the relative sparseness issue of source-domain attribute vectors in ZSR problems. We formulate ZSL as a latent structural SVMs. To account for the rich data variability in target domain, we propose a novel data-dependent adaptive similarity function that adapts to test-time source and target data instances. Our similarity function searches for latent features from both domains by maximizing the latent similarities as well as minimizing the penalties incurred by feature displacements. To parameterize our adaptive similarity function, we propose a family of bilinear based similarity functions with regularized least squares to penalize displacements. We design a specific function with closed-form solutions and propose its corresponding learning algorithm for ZSR. To demonstrate the effectiveness of our proposed method, we test it on four benchmark datasets for ZSR with comprehensive comparison, and show significant improvement over the state-of-the-art under both standard and transductive settings.

\newpage
{\small
	\bibliographystyle{ieee}
	\bibliography{egbib}
}

\end{document}